\def\Figref#1{Figure~\ref{#1}}
\def\eqref#1{equation~\ref{#1}}
\def\1{\bm{1}}
\def\rc{{\textnormal{c}}}
\def\rg{{\textnormal{g}}}
\def\vc{{\bm{c}}}
\def\vr{{\bm{r}}}
\def\vx{{\bm{x}}}
\def\vz{{\bm{z}}}
\def\mM{{\bm{M}}}
\DeclareMathAlphabet{\mathsfit}{\encodingdefault}{\sfdefault}{m}{sl}
\SetMathAlphabet{\mathsfit}{bold}{\encodingdefault}{\sfdefault}{bx}{n}
\def\gG{{\mathcal{G}}}
\def\sC{{\mathbb{C}}}
\def\sG{{\mathbb{G}}}
\def\sX{{\mathbb{X}}}
\def\sZ{{\mathbb{Z}}}
\newcommand{\E}{\mathbb{E}}
\newcommand{\R}{\mathbb{R}}
\newcommand{\JSD}{D_{\mathrm{JS}}}
\theoremstyle{plain}
\newtheorem{theorem}{Theorem}[section]
\newtheorem{corollary}[theorem]{Corollary}
\theoremstyle{definition}
\theoremstyle{remark}
\newcommand{\method}{DUET\xspace}
\newcommand{\stovez}{$\text{\method}_{\lambda=0}$\xspace}
\newcommand{\ie}{\textit{i.e.}\xspace}
\newcommand{\eg}{\textit{e.g.,}\xspace}
\newcommand{\dg}{{G}}
\newcommand{\dc}{{C}}
\newcommand{\Normal}{\mathcal{N}}
\newcommand{\Loss}{L}
\newcommand{\VM}{\text{vM}}
\newcommand{\change}[1]{{#1}}
\newcommand{\ourtitle}{DUET: 2D Structured and \change{Approximately} Equivariant Representations}
\icmltitlerunning{\ourtitle}
\begin{document}

\twocolumn[
\icmltitle{\ourtitle}



\icmlsetsymbol{equal}{*}

\begin{icmlauthorlist}
\icmlauthor{Xavier Suau}{Apple}
\icmlauthor{Federico Danieli}{Apple}
\icmlauthor{T. Anderson Keller}{Apple,UvA}
\icmlauthor{Arno Blaas}{Apple}
\icmlauthor{Chen Huang}{Apple}
\icmlauthor{Jason Ramapuram}{Apple}
\icmlauthor{Dan Busbridge}{Apple}
\icmlauthor{Luca Zappella}{Apple}
\end{icmlauthorlist}

\icmlaffiliation{Apple}{Apple}
\icmlaffiliation{UvA}{University of Amsterdam}

\icmlcorrespondingauthor{Xavier Suau}{xsuaucuadros@apple.com}

\icmlkeywords{Equivariance, SSL, robustness, augmentation}

\vskip 0.3in
]



\printAffiliationsAndNotice{} 

\begin{abstract}
 \todo{rewrite abstract}
    Multiview Self-Supervised Learning (MSSL) is based on learning invariances with respect to a set of input transformations. However, invariance partially or totally removes transformation-related information from the representations, which might harm performance for specific downstream tasks that require such information. 
    We propose 2D strUctured and approximately EquivarianT representations (coined \method), which are 2d representations organized in a matrix structure, and equivariant with respect to transformations acting on the input data. \method representations maintain information about an input transformation, while remaining semantically expressive.
    Compared to SimCLR \cite{simclr} (unstructured and invariant) and ESSL \cite{essl} (unstructured and equivariant), the structured and equivariant nature of \method representations enables controlled generation with lower reconstruction error, while controllability is not possible with SimCLR or ESSL. \method also achieves higher accuracy for several discriminative tasks, and improves transfer learning.

\end{abstract}

\section{Introduction}





\begin{figure}[t]
\caption{\method. 
The backbone $f$ yields a 2d representation for each transformed image $f(\tau_g(\vx))$ (\eg $\tau_g$ is a rotation by $g$ degrees). 
The group marginal is obtained as the softmax (sm) of the sum of the rows, and is compared to the prescribed target (red) with our group loss $L_\sG$. 
The content is obtained by summing the columns, and contrasted ($L_\sC$) with the other view through a projection head $h$. 
The final representation for downstream tasks is the 2d one, which has been optimized through its marginals.}
\centering
\label{fig:schema}
\vskip 0.1in
\includegraphics[width=78mm,  height=52mm]{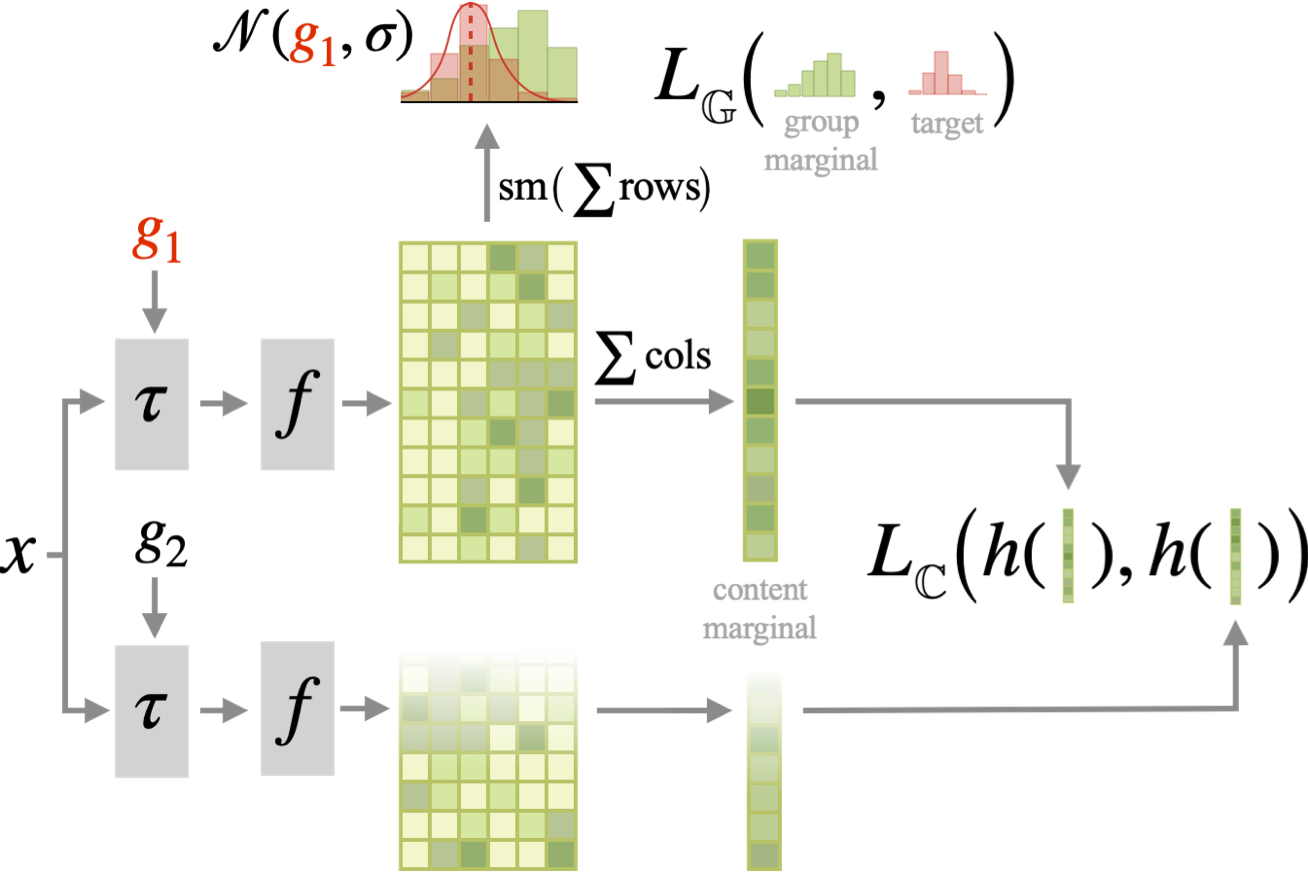}
\vskip -0.2in
\end{figure} 

The field of representation learning has evolved at a rapid pace in recent years, partially due to the popularity of Multiview Self-Supervised Learning (MSSL)~\cite{simclr, moco, swav, byol, btwins}. 
The main idea of MSSL is to learn transformation-invariant representations by comparing data views that underwent different transformations. 
If the transformations alter \emph{only} task-irrelevant information, and if representations of multiple views are similar, then those representations should only contain task-relevant information. 
However, one can always find a downstream task for which the chosen transformations are relevant. 
For example, MSSL representations which learn to be color invariant are likely to fail at predicting fruit ripeness where color information is required \cite{makes_good_views}, or at the tasks of generation or segmentatation \cite{hybgencon}.

One way to maintain information in the representations is by preserving all possible information from the input, as pursued by InfoMax \cite{InfoMax} frameworks. 
However, it has been shown empirically and theoretically that for tasks like classification, invariance to nuisance information allows for greater data efficiency and downstream performance \cite{tipooling, MI_Max}. 
In an attempt to simultaneously satisfy the demands of information-rich representations (allowing for generalization to different tasks) and complex invariances (allowing for powerful discriminative representations), modern machine learning research has pursued the concept of structured representations.
Colloquially, a representation can be considered structured with respect to a set of transformations if firstly, the transformation between two inputs can be easily recovered by comparing their representations, and secondly, there is a known method for recovering the representation which is correspondingly invariant to the transformation set. 
An example of structured representations are convolutional feature maps, which allow for the spatial position (the translation element) to be easily extracted, while similarly allowing for global translation invariance through spatial pooling. 
Given the success of structured representations, significant work has gone into expanding the range of transformations for which a structured representation can be recovered, for example, rotation, scaling, and other algabraic group symmetries \cite{gcnn, sesn, arblie, qtae, offseteq}.

In the context of MSSL, equivariance has also been successfully used to improve distributional robustness \cite{essl, augself, hssl}.
However, to date, this equivariance has largely been encouraged at an informational level rather than a structural level, making the careful disassociation of the equivariant and invariant aspects of the representation challenging or impossible.
For example, ESSL \cite{essl} and AugSelf \cite{augself} make representations sensitive to a transformation by regressing the transformation parameter, making their representations theoretically equivariant, but not interpretably structured, as there is no explicit form of the transformation at representation level. Such a lack of structure makes computing invariances or controlled generation from such representations significantly more challenging.

In this work, we present \method, a method to learn structured \emph{and} equivariant representations with MSSL.
Instead of learning 1-dimensional representations as in SimCLR \cite{simclr} or ESSL, \method representations are reshaped to 2d (see \Cref{fig:schema}). This allows for
a richer optimization through their row- and column-wise marginals, which are respectively related to the \textit{group element} (the transformation parameter, \eg rotation angle) and \textit{content} (all the information that is invariant to the transformation actions).
In summary, our main contributions are \todo{Update}:

\begin{itemize}
    \item We introduce \method, a method to incorporate interpretable structure in MSSL representations for both finite and infinite groups with negligible computational overhead.\footnote{Code available at \href{https://github.com/apple/ml-duet}{https://github.com/apple/ml-duet}}  Our approach also performs well for parameterized transformations that do not satisfy all algebraic group axioms \cite{serre}, making it widely applicable to most transformations used in MSSL.
    \item We show empirically that \method representations become approximately equivariant as a by-product of their predictiveness of a transformation parameter. Importantly, we prescribe an explicit form of transformation at representation level that enables controllable generation, not achievable with ESSL or SimCLR.
    \item We shed some light on why certain symmetries (\eg horizontal flips, color transformations) are harder to learn from typical computer vision datasets, due to inherent ambiguity in the data with respect to a transformation. For example, cars appear in both left and right directions, hence making it difficult to define what a \textit{non-flipped} car is. 
    \item We provide extensive experiments on several datasets, comparing with SimCLR and ESSL. We show that \method representations are suitable for discriminative tasks, transfer learning and controllable generation.
\end{itemize}


\section{Related Work}


\textbf{Structured and Equivariant Representations.} 
In the unsupervised learning domain, existing works like \cite{isa_vae} have extensively explored structured latent priors for the Variational Autoencoder (VAE) \cite{Kingma2014}, while the recent Topographic VAE \cite{tvae} aims to induce topographic organization of the observed set of transformations. The idea of structured representations has also been connected to unsupervised learning of disentangled representations \cite{betavae,kumar2018variational}. Another closely related line of work focuses on learning equivariance \cite{gcnn, sesn, arblie, qtae, offseteq} as a more general form of structured representations. For example, \cite{sesn} propose to use a basis of transformed filters to learn equivariant features, which generally leads to improved model robustness and data efficiency. NPTN \cite{nptn} follows on \cite{sesn} and proposes to use a completely learnt basis of filters, learning unsupervised invariances.

\textbf{Structure in MSSL.}
Modern MSSL is based on discarding task-irrelevant information via image augmentations. 
Contrastive and non-contrastive approaches achieve this respectively by comparing augmented views of different data \cite{simclr, moco, swav}, or by only comparing views from the same datum \cite{byol, btwins}.
Several authors have explored the comparison of spatially structured representations \cite{admim} (exploiting the InfoMax principle \cite{InfoMax}) or using variants of the NCE \cite{nce} loss \cite{gim, cpc, deepinfomax}.
Some works have studied the impact objectives have on the distributions of representations \citep{DBLP:conf/icml/0001I20}, and how these representations may be identifiable with the latent factors of the data generative process \citep{DBLP:conf/icml/ZimmermannSSBB21}.
Recent works have tackled the preservation of information in MSSL representations. ESSL \cite{essl} supplements SimCLR \cite{simclr} by predicting the parameter of a transformation of choice, and obtaining theoretically equivariant representations. Similarly, although not focused on equivariance, AugSelf \cite{augself} predicts the difference in transformation parameters between two views. PCL \cite{pcl_feature_suppression} adds a reconstruction loss to preserve information about the input. Concurrent work \cite{mast} disentangles the feature space with masks learned via augmentations.

\section{Preliminary Considerations}
\label{sec:preliminary}

\addtolength{\tabcolsep}{-2pt}    
\begin{table}[tb]
    \caption{Transformations considered with their associated parameters. Column $g$ shows the corresponding parameters for the group-marginal definition in \method, and \textit{Target} shows the recommended target distribution. Note that flips are mapped to $\{\frac{1}{4},\frac{3}{4}\}$, turning them into cyclic groups.} 
    \label{tab:groups}
    \footnotesize
    \vskip 0.1in
    \begin{tabular}{lcllc}
    \toprule
    Transform. & Finite & Parameter & $g$ & Target \\
    \midrule
    \vspace{1mm}
     Rot. (4-fold)       & $\surd$  & $\{0,90,180,270\}$ & $\{\frac{1}{8},\frac{3}{8},\frac{5}{8},\frac{7}{8}\}$ & $\VM$ \\
     \vspace{1mm}
     Rot. ($360$)  &          & $[-180, 180]$ & $[0, 1]$ & $\VM$ \\
     \vspace{1mm}
     H. Flip             & $\surd$  & $\{0, 1\}$ & $\{\frac{1}{4},\frac{3}{4}\}$ & $\VM$ \\
     \vspace{1mm}
     V. Flip             & $\surd$  & $\{0, 1\}$ & $\{\frac{1}{4},\frac{3}{4}\}$ & $\VM$ \\
     \vspace{1mm}
     Grayscale             & $\surd$  & $\{0, 1\}$ & $\{0, 1\}$ & $\mathcal{N}$ \\
     \vspace{1mm}
     Brightness             & & $[0.6, 1.4]$ & $[0, 1]$ & $\mathcal{N}$ \\
     \vspace{1mm}
     Contrast             & & $[0.6, 1.4]$ & $[0, 1]$ & $\mathcal{N}$ \\
     \vspace{1mm}
     Saturation             & & $[0.6, 1.4]$ & $[0, 1]$ & $\mathcal{N}$ \\
     \vspace{1mm}
     Hue             & & $[-0.1, 0.1]$ & $[0, 1]$ & $\mathcal{N}$ \\
     RRC          & & $[0.2W, W]$ & $[0, 1]$ & $\mathcal{N}$ \\
    \bottomrule
\end{tabular}
\vskip -0.1in
\end{table}
\addtolength{\tabcolsep}{1pt}  

\subsection{Groups and Equivariance}
Let $f: \sX \mapsto \sZ$ be a mapping from data to representations. Such mapping is equivariant to the algebraic group $\gG = (\sG, \cdot)$ if there exists an input transformation $\tau: \sG \times \sX \mapsto \sX $ (noted $\tau_g(\vx)$) and a representation transformation $T: \sG \times \sZ \mapsto \sZ $ (noted $T_g(\vz)$) so that
\begin{align}
\label{eq:equivariance}
    T_g(f(\vx)) 
    &= f(\tau_g(\vx)), 
    & 
    \forall 
    & g \in \sG,
    &
    \forall 
    & \vx \in \sX.
\end{align}
If $\tau_g$ and $T_g$ form algebraic groups in the input and representation spaces respectively, then the mapping $f$ preserves the structure of the input group in the representation space (homomorphism). Recall that for $(\sG, \cdot)$ to form a group, the properties of \textit{closure}, \textit{associativity}, and existence of \textit{neutral} and \textit{inverse} elements must be satisfied \cite{serre}. Here we consider both finite and infinite groups. 

\subsection{On MSSL Input Transformations} 
\label{sec:mssl_transformations}

In MSSL, $\tau_g$ is defined by a parameterized transformation applied to the input. For example, rotation is parameterized by a real angle ($g \in \R$, infinite group). If $g \in [0, 2\pi]$ then it forms a cyclic group. One can also use discrete rotations which form a finite group where $g \in \{0^\circ, 90^\circ, 180^\circ, 270^\circ\}$. However, not all input transformations form a group. For example, a change in image contrast moves some pixel values out of range, thus clipping is applied which invalidates the \textit{associativity} property (\eg $\tau_{2.0}(\tau_{0.5}(\vx)) \neq \tau_{0.5}(\tau_{2.0}(\vx))$).
We also include RandomResizedCrop (RRC) in our study using the relative cropped width $W$ as a proxy for scale (assuming loss of information about location and aspect ratio).
All transformation parameters are mapped in $[0, 1]$ by min-max normalization as shown in Table~\ref{tab:groups}.
Although some transformations do not form a group (\eg RRC, color transformations), the concept of equivariance is often relaxed to embrace transformations which do not form groups. Note that this assumption does not invalidate our methodology for exact groups, and helps understand how our method is suitable for non-exact groups.



\section{\method Representations}
\label{sec:method}

In this section we describe how we can use \method to learn representations that are structured with respect to an algebraic group $\gG = (\sG, \cdot)$. 
 The overall \method architecture is shown in \Cref{fig:schema}. A training input image $\vx$ is transformed twice by sampling 2 group actions from the same group $g_1, g_2 \in \sG$ (\eg two angles of rotation). We obtain the transformed images $\vx_k = \tau_{g_k}(\vx)$ with $k=1, 2$. Let $f$ be a deep neural network backbone such that $\vz_k = f(\vx_k) \in \R^{\dc\times \dg}$, where $\dc$ and $\dg$ are the number of rows and columns in the representation, as shown in \Cref{fig:schema}.
This 2-dimensional representation $\vz_k$ models the joint (discretized and unnormalized) distribution $P(\rc, \rg | \vx_k)$ where  $\rc \in \R^C$ and $\rg \in \sG$ are two random variables defined in the content and group element domains. Our joint interpretation allows to  marginalize $P(\rc | \vx_k)$ by summing the columns of $\vz_k$, and $P(\rg | \vx_k)$ by summing the rows. 
\change{Rather than imposing a certain dependence (or independence) structure between $\rc$ and $\rg$, (conditioned on $\vx_k$), we only impose our objectives on the marginals $P(\rc | \vx_k)$ and $P(\rg | \vx_k)$ and let the model learn such dependencies from the data.}
Note that a final Batch Normalization (BN) \cite{batchnorm} layer in $f$ will make the mean of $\vz_k$ to be approximately $\beta$ (bias term in BN). This is important for equivariance as shown in \Cref{sec:equivariance}. Although we focus on a single group, \method's formulation is suited to handle multiple groups as discussed in \Cref{app:multi_group}, which we leave as future work.

\subsection{The Group Marginal Distribution}
\label{subsec:marginaldist}

As we marginalize $\vz_k$ over the content dimension ($\dc$) we get $\{\mu_j\}_{j=1}^\dg$, the sum of each column in $\vz_k$. We obtain our discretized group marginal $P(\rg | \vx_k)$ by softmaxing $\mu_j$.  
Since the parameters $g_k$ sampled during training are known, we can design a target distribution for $P(\rg | \vx_k)$ (red distribution in \Cref{fig:schema}). 
We use a von-Mises ($\VM$) target $q(\rg|\vx_k) = \VM(g_k, \kappa)$ for cyclic groups, and a Gaussian ($\mathcal{N}$) target $q(\rg|\vx_k) = \Normal(g_k, \sigma)$ for all other groups. Both targets are chosen because of the simplicity by which we can encapsulate parameter information in their structure (their mean), and the controllability of the uncertainty about $\rg$ via $\sigma$ (or $\kappa$). For readability, we refer to the uncertainty as $\sigma$ for both $\VM$ and $\mathcal{N}$ targets, where $\sigma \approx \frac{1}{\sqrt{2\pi\kappa}}$. 

To be comparable to $P(\rg|\vx_k)$, we also discretize our target in $[0, 1]$ obtaining $Q(\rg|\vx_k)$. Let $\Omega_j$ be the intervals of a $\dg$-sized partition, and $g_j$ their centers. Then, the discretized target is obtained by integrating the continous target according to the partition: $
Q_j(\rg|\vx_k) := Q(\rg=g_j\,|\,\vx_k) = \frac{\int_{\Omega_j} q(\rg|\vx_k) dg}{\int_0^1 q(\rg|\vx_k) dg}$. For the Gaussian target, we assume a slight boundary effect as we do not integrate the tails beyond $\Omega_j$.




We encourage the observed $P(\rg|\vx_k)$ to match the target by minimizing the Jensen-Shannon Divergence ($\JSD$) between the discretized distributions. The group loss for the $i$-th image $\vx^{(i)}$ in a batch is
\begin{equation}
    \label{eq:loss_g}
    \Loss_{\sG}^{(i)} = \frac{1}{2}\sum_{k=1}^2\JSD\big(P(\rg|\vx_k^{(i)}) \; \| \; Q(\rg|\vx_k^{(i)})\big).
 \end{equation}
\paragraph{The choice of $\sigma$ is key to encourage structure} 
Both very small and very large values of $\sigma$ will lead to a loss of structure in the columns of $\vz$. 
For small $\sigma$, the target takes a form close to a $\delta$ distribution. This results in an invariant discretized target (as $\delta$ moves inside interval $\Omega_j$) or abrupt target changes (as $\delta$ moves from $\Omega_j$ to $\Omega_{j+1}$), which prevents learning proper structure.
Conversely, for large $\sigma$, the target will be close to a uniform distribution, thus removing all information about the group element (all columns contribute equally). In \Cref{sec:ablation} we find empirically  that  $\sigma\approx 0.2$ is optimal in our setting. Note that this value corresponds to a normal distribution $\mathcal{N}(\cdot, 0.2)$ that covers the $[0, 1]$ domain within approximately its $3\sigma$ span (when centered at $0.5$), being a good trade-off in terms of structure.
Interestingly, since our group elements are bounded in $[0, 1]$, the value of $\sigma$ can be kept constant for all transformation groups and data sets.

\subsection{The Content Marginal Distribution}

As we marginalize $\vz_k$ by summing over the group dimension ($\dg$), we obtain  $P(\rc|\vx_k)$, the probability of observing the content $\rc$ given $\vx_k$. Such distribution is invariant to the group actions, and contains all relevant information of $\vx_k$ not related to the group $\sG$. For example, the content of an image of a horse is still a horse regardless of is rotation.
We maximize the agreement between the content of two views of an image ($\vx_1, \vx_2$). Our content representation is defined directly by the values of $P(\rc|\vx_k)$, noted as $\vc_k\in \R^{\dc}$. Following the recent trends in MSSL, both content representations are projected with a network $h$. Then we use the $\text{NTXent}$ loss \cite{simclr} in form of $\Loss_{\sC} = \text{NTXent}(h(\vc_1), h(\vc_2))$ for a SimCLR-based architecture.


\subsection{The \method Loss}

Our final loss for a full batch of $N$ images is
\begin{equation}
    \label{eq:full-loss}
    \Loss_{\text{\method}} = \frac{1}{N}\sum_{i=1}^N \Loss_{\sC}^{(i)} + \lambda \Loss_{\sG}^{(i)}.
\end{equation}
$\Loss_{\sC}$ encourages similarity between the content representations of 2 views, explicitly \textit{made} invariant to the group action, as opposed to SimCLR which contrasts representations \textit{to achieve} invariance to the group action. 
The parameter $\lambda$ controls how strongly the group structure is imposed.

\subsection{Recovering the Transformation Parameter}
\label{sec:recover}

An interesting property of \method representations is the ability to recover the transformation parameter of a test image without relying on extra regression heads. This property is useful to transform representations equivariantly (see \Cref{sec:equivariance}). It also enables interpretability, since one can analyze the default transformation parameters associatied to an image or a dataset. Assuming optimal training of $L_\sG$, the group marginal will resemble the imposed target. Therefore, the transformation parameter $\tilde{g}$ of an arbitrary image $\vx_k$ for a Gaussian target is directly recoverable as $\tilde{g} = \E[\rg|\vx_k] \approx \sum_{j=1}^\dg P_j(\rg | \vx_k) g_j$. 
In practice, for improved robustness, we fit a Gaussian (or $\VM$) function to the values $P_j(\rg|\vx_k)$, and we estimate $\tilde{g}$ as is argmax.

\subsection{Equivariance in \method}
\label{sec:equivariance}
Similarly to the approach of ESSL, \method encourages equivariance by making the neural network sensitive to the transformation parameter $g$. However, in our method this sensitivity is defined explicitly via \cref{eq:loss_g}, such that applying a transformation $\tau_g(\vx)$ in input space results in shifting by $g$ the mean of the group marginal distribution corresponding to their representation $\vz = f(\vx)$. 
In practice, we prescribe \emph{a-priori} a form for the feature-space transformation $T_g$ corresponding to the input-space transformation $\tau_g$ in \cref{eq:equivariance}, with the advantage of gaining additional controllability over such transformations (see also \cref{sec:generation}).

Specifically, we design $T_g$ according to the following considerations. Assuming optimal training of $L_\sG$, we have that the recovered group marginal distribution $P(\rg|\vx_k)$ for a given input $\vx$ transformed by $\tau_{g_k}$ resembles the target $Q(\rg|\vx_k)$. 
For \Cref{eq:equivariance} to hold, we need to design $T_g$ such that applying the column sum and softmax operations used to derive $P(\rg|\vx_k)$ (see \Cref{subsec:marginaldist}) to $T_{g_k}(\vz)$ also resembles $Q(\rg|\vx_k)$. 
We can ensure this by changing the column sums of $\vz$ (denoted as $\{\mu_j\}_{j=1}^{\dg}$) with values $\{\hat{\mu}_j\}_{j=1}^{\dg}$ that after applying the softmax yield $Q(\rg|\vx_k)$, i.e.
$\hat{\mu}_j = \text{softmax}^{-1}(\hat{Q}_j)$ with $\hat{Q}_j = {Q}_j(\rg|\vx_k)$ for ease of notation. There are infinitely many solutions for $\hat{\mu}_j$, so we choose the $\hat{\mu}_j$ that satisfies $\sum_j \hat{\mu}_j = \beta_j$. This choice comes from the fact that the final BN layer in $f$ will make the mean of $\vz$ close to the BN bias terms $\beta_j$.

Therefore,
\begin{equation}
  \hat{\mu}_j = \ln \hat{Q}_j + \ln \sum_j  e^{\hat{\mu}_j}\quad \text{with} \quad \sum_j \hat{\mu}_j = \beta_j.
\end{equation}
The solution to this equation is given by
\begin{equation}
    \hat{\mu}_j = \ln \hat{Q}_j + \beta_j - \frac{1}{\dg} \sum_j \ln \hat{Q}_j .
    \label{eq:muhat}
\end{equation}

Finally, we define $T_g$ so that it swaps the mean $\mu_j$ with $\hat{\mu}_j$
\begin{equation}
\label{eq:t_g}
    T_g(\vz) = \vz - \mM + \widehat{\mM}_{g},
\end{equation}
where all elements of each column $j$ of $\mM$ (or $\widehat{\mM}_{g}$) take the value $\mu_j$ (or $\hat{\mu}_j$).
As such, applying the column sum and softmax operations to $T_{g_k}(\vz)$ yields the same values as applying them to $\vz_k$ (at optimality), which is a necessary condition for \Cref{eq:equivariance} to hold. 
Furthermore, defined in this way, $T_g$ satisfies the group axioms (\Cref{app:proof-axioms}, again assuming $L_\sG$ is minimized).

In practice, as it also happens in other works such as ESSL or TVAE, we cannot expect \cref{eq:equivariance} to hold always (i.e. for all $\vx$ and $\rg$),
as that would require perfect generalization of the learned equivariance.
However, for our method, we can bound the equivariance generalization error w.r.t. unseen $\rg$ (\Cref{sec:equivarianceproof}), and furthermore demonstrate that it is small in practice in \Cref{sec:empiricalequivariance}.

\change{\textbf{On predictiveness and equivariance.} It is key to differentiate between predictiveness and equivariance. While predictiveness implies equivariance, the opposite is not always true (\eg invariance is a specific case of equivariance that does not imply predictiveness). Therefore, we emphasize that the approximate equivariance in \method is a by-product of the predictiveness of $\rg$ at group marginal level.}

\begin{figure*}[htb]

\caption{Empirical validation of equivariance in \method. We measure the L2 distance between the representations of a transformed image $f(\tau_g(\vx))$ and the transformed representations of that image $T_g(f(\vx))$, varying $g \in [0,1]$ along both axes. The plots show the average L2 distance for 1000 CIFAR-10 test images. Note the strong similarity for the same group element (diagonal), and the cyclic nature of rotations or flips when using a $\VM$ target (\textbf{top row}), as opposed to the Gaussian target (\textbf{bottom row}). More results in \Cref{app:empiricalequivariance}.}
\label{fig:empiricaleq}

\centering
\begin{subfigure}{0.22\textwidth}
    \centering
    \includegraphics[width=\textwidth]{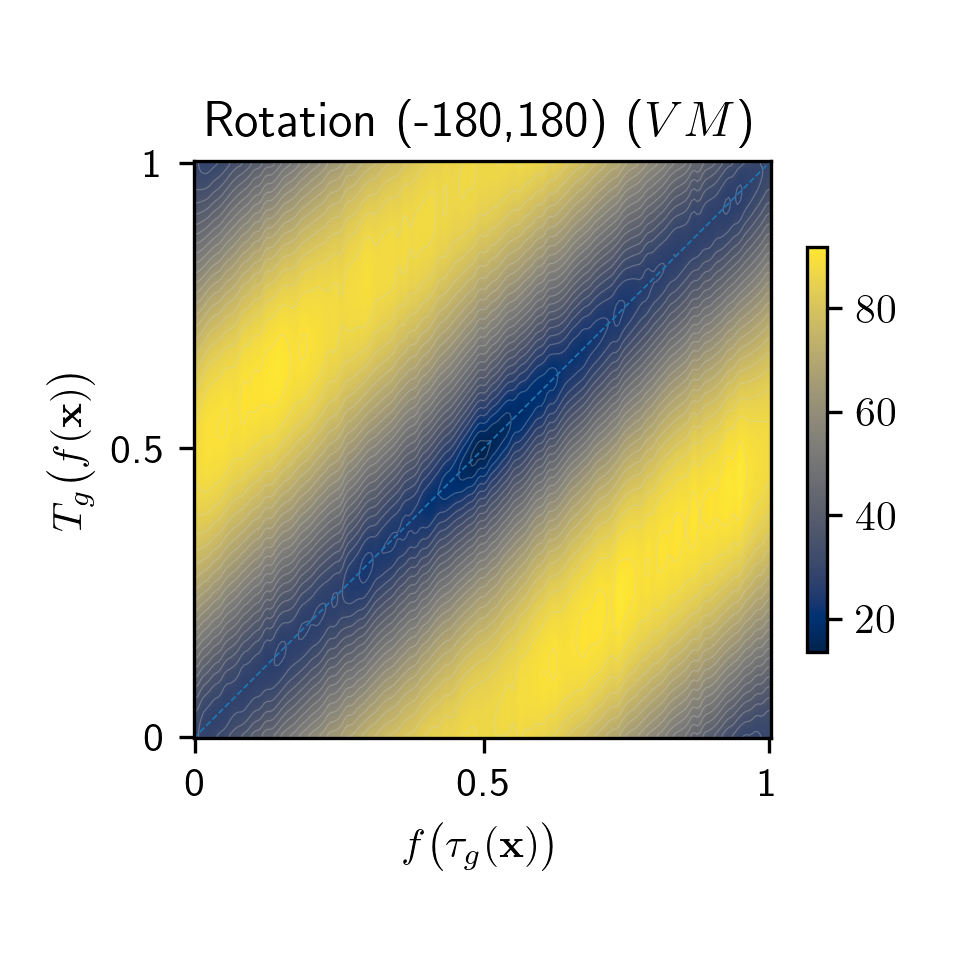}
\end{subfigure}%
\hfill
\begin{subfigure}{0.22\textwidth}
    \includegraphics[width=\textwidth]{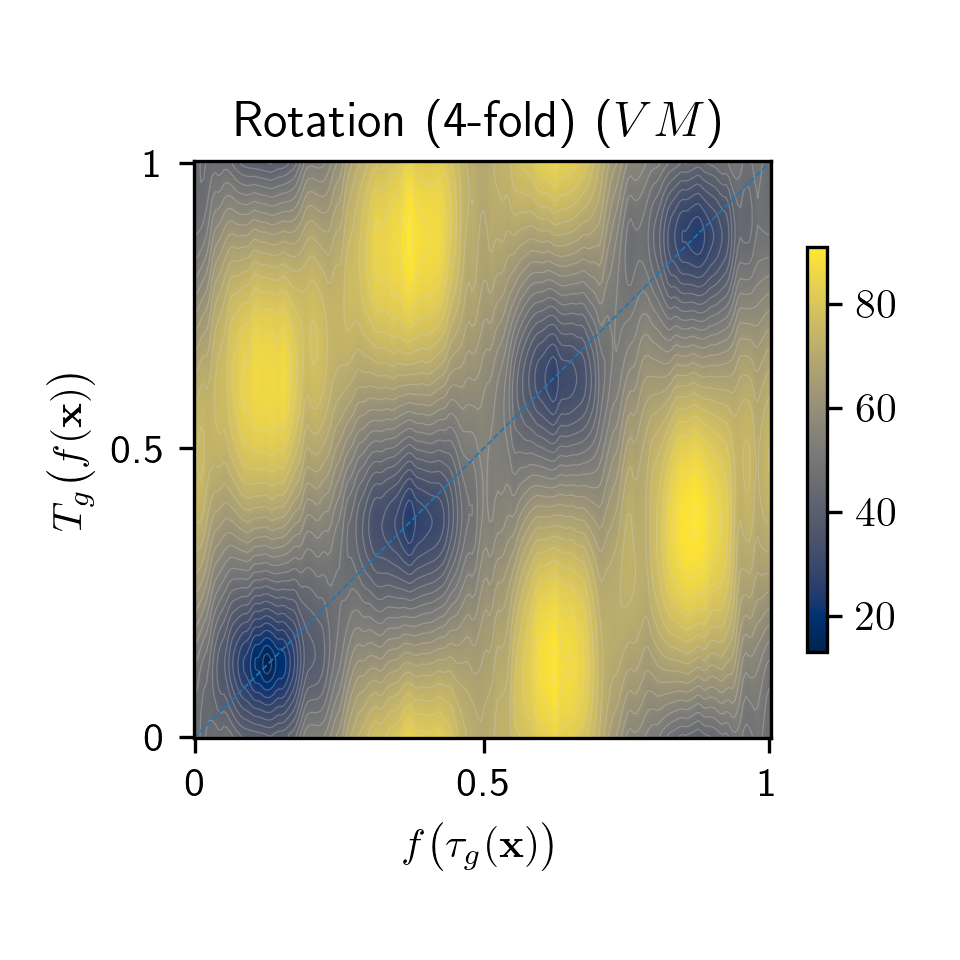}
\end{subfigure}%
\hfill
\begin{subfigure}{0.22\textwidth}
    \includegraphics[width=\textwidth]{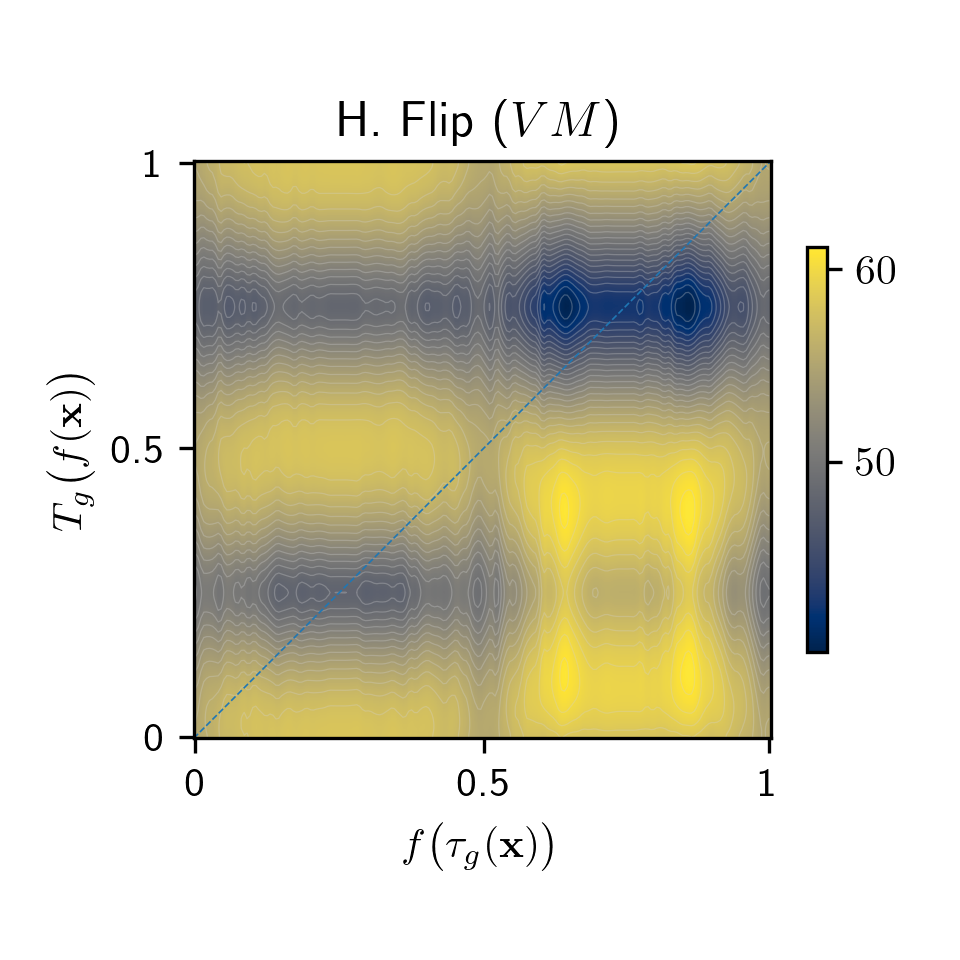}
\end{subfigure}
\hfill
\begin{subfigure}{0.22\textwidth}
    \includegraphics[width=\textwidth]{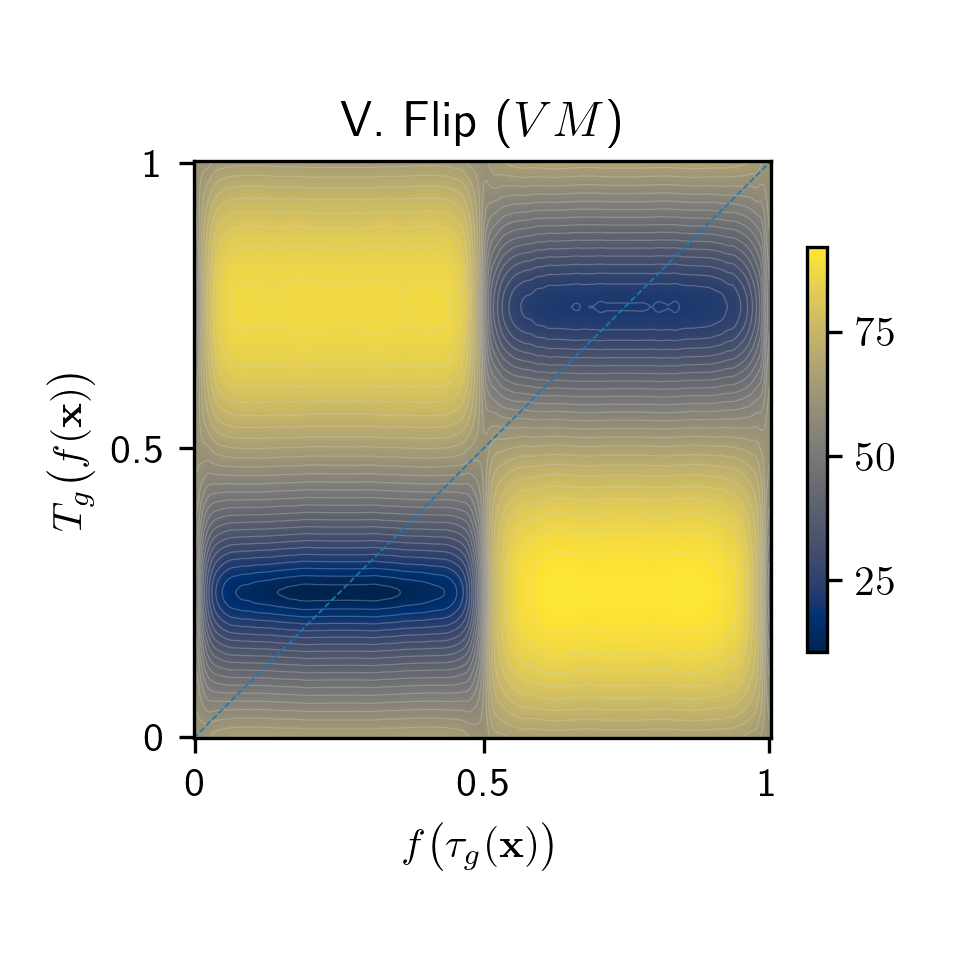}
\end{subfigure}

\vspace{-4mm}

\centering
\begin{subfigure}{0.195\textwidth}
    \includegraphics[width=\textwidth]{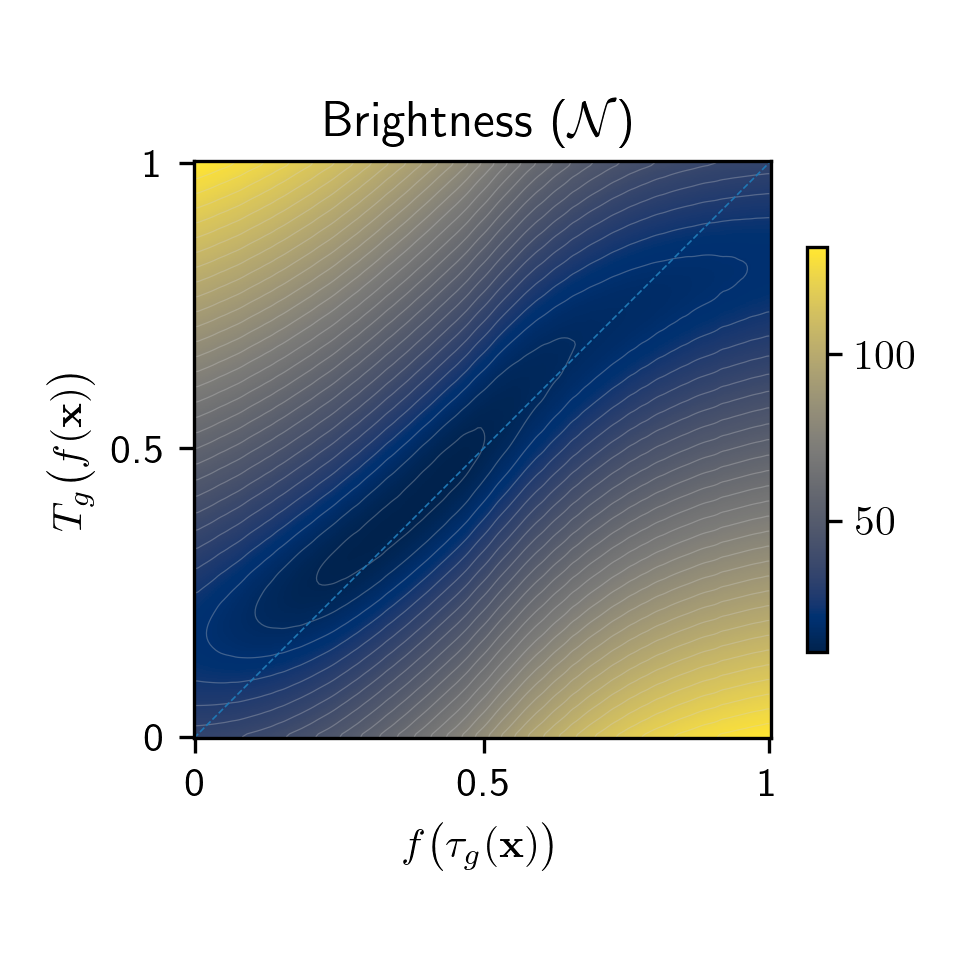}
\end{subfigure}
\hfill
\begin{subfigure}{0.195\textwidth}
    \includegraphics[width=\textwidth]{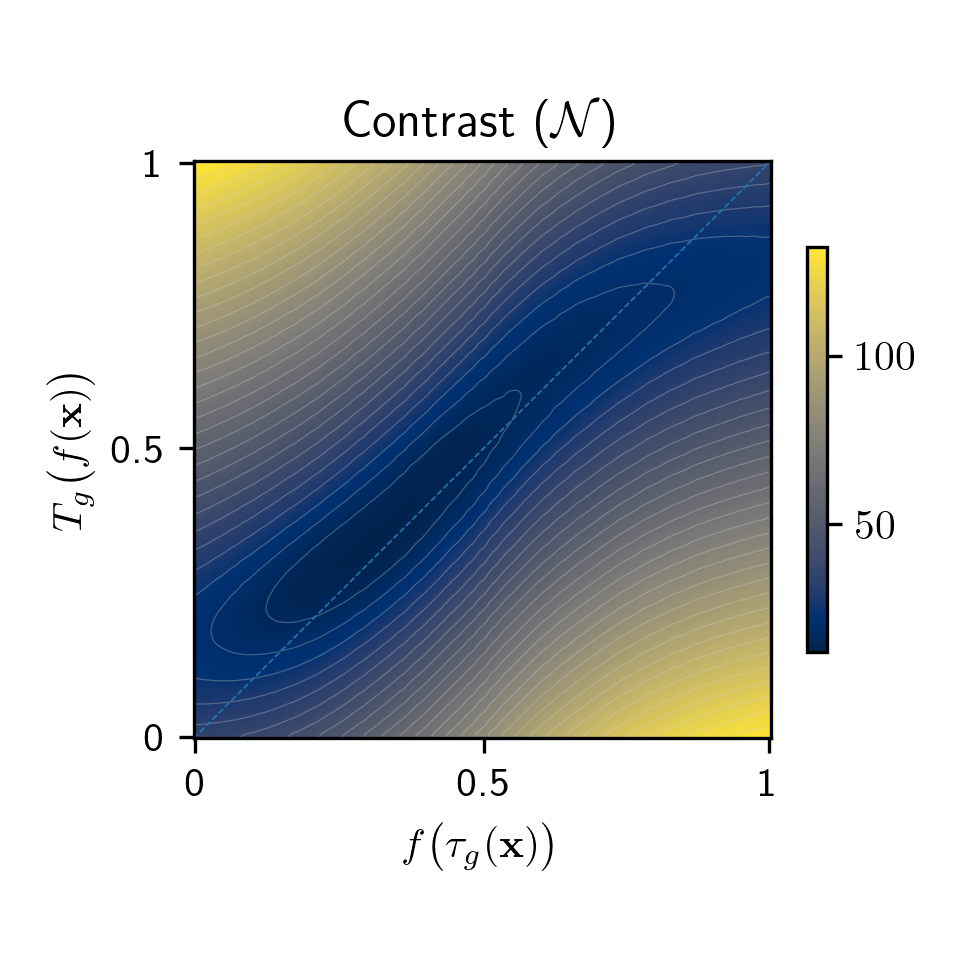}
\end{subfigure}
\hfill
\begin{subfigure}{0.195\textwidth}
    \includegraphics[width=\textwidth]{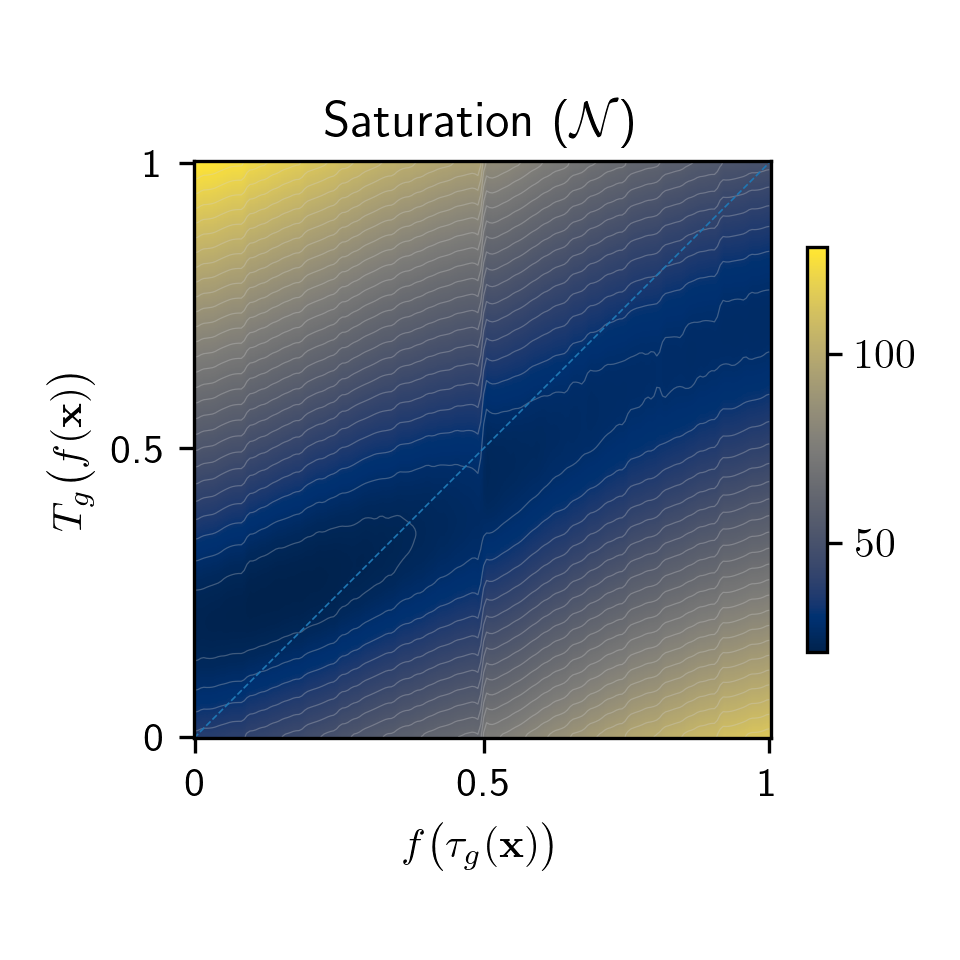}
\end{subfigure}
\hfill
\begin{subfigure}{0.195\textwidth}
    \includegraphics[width=\textwidth]{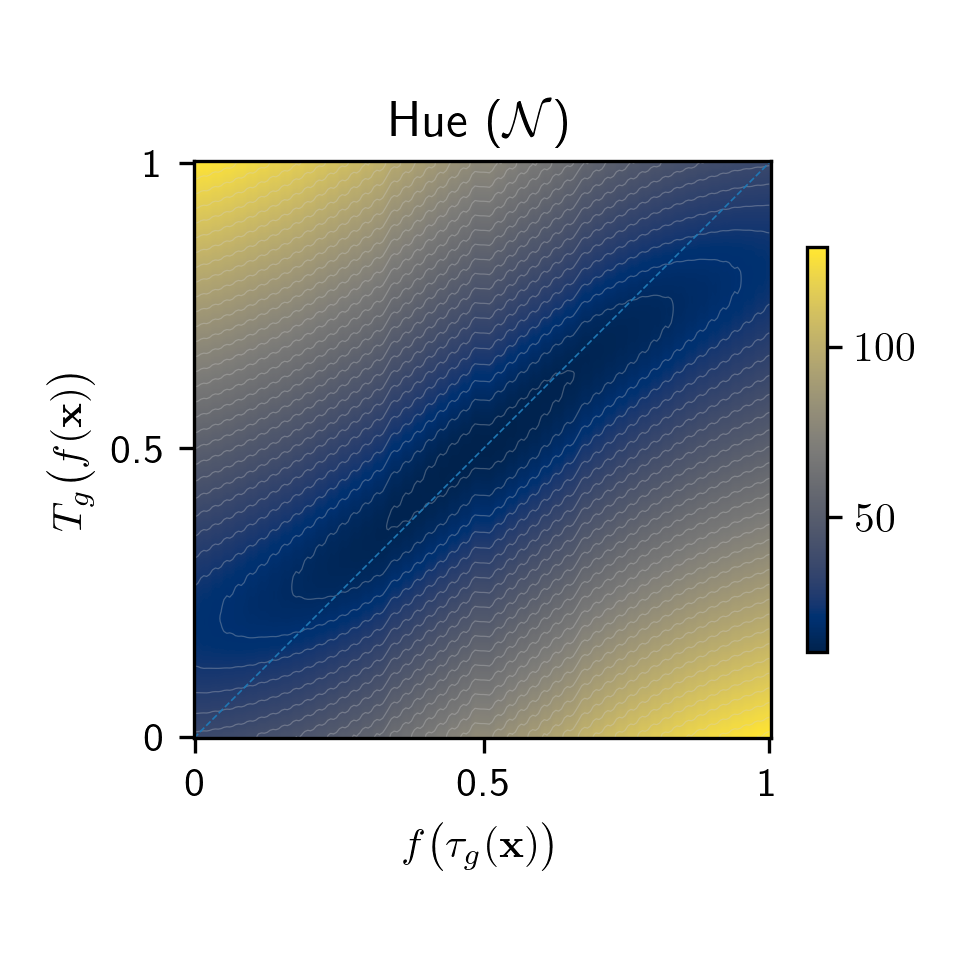}
\end{subfigure}
\hfill
\begin{subfigure}{0.195\textwidth}
    \includegraphics[width=\textwidth]{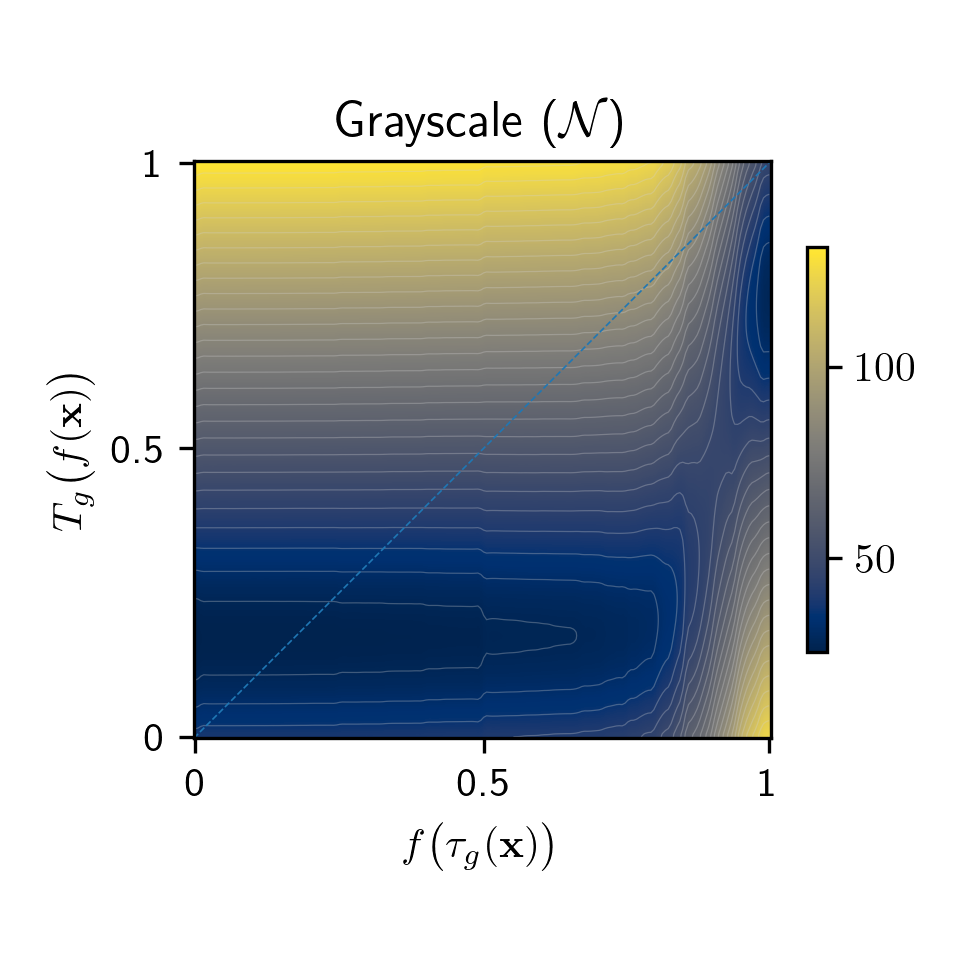}
\end{subfigure}
\vspace{-4mm}
\vskip -0.1in
\end{figure*}

\section{Experimental Results}

\subsection{Empirical Proof of Equivariance}
\label{sec:empiricalequivariance}

We start with an empirical validation of equivariance by measuring how \cref{eq:equivariance} holds for real data. To do this, we use the transformation $T_g$ from \cref{eq:t_g} and compute representations $f(\tau_{g_1}(\vx))$ and $T_{g_2}(f(\vx)) \;\forall g_1,g_2\in\sG$. An equivariant map should result in a minimal L2 distance $\ell_{g_1,g_2} = \| f(\tau_{g_1}(\vx)) - T_{g_2}(f(\vx)) \|_2^2$ when $g_1=g_2$ \todo{Careful: our notion of equivariance is linked to the group marginal: we can't hope (in general) that the equivariance condition holds at feature space $\vz$; we can, however, hope that it matches at its column-reduction $\mu$. That is to say, I think it would be more consistent if you compared $\| \sum_i f(\tau_{g_1}(\vx))_{ij} - T_{g_2}(f(\vx))_{ij} \|_2^2 = \|\mu(\tau_{g_1}(\vx)) - \mu(T_{g_2}(f(\vx)))\|_2^2$. Actually, I'm quite impressed that the feature-space error is small at all! This hints at the fact that there's some implicit regularisation going on under the hood, which forces the whole feature space to follow equivariance, too! Amazing!}. To verify this, we plot the pairwise $\ell_{g_1,g_2}$ for all elements $g_1,g_2$ and different transformations. More precisely, we sweep 100 values of $g_1, g_2$ in $[0, 1]$ for 1000 randomly selected CIFAR-10 \cite{krizhevsky2009learning} test images and we show the average pairwise L2 distance in \Cref{fig:empiricaleq}.

For infinite groups (\ie color transformations and rotation ($360$)), there is a strong similarity along the diagonal, validating \cref{eq:equivariance}. For finite groups (rotation 4-fold, flips and grayscale) we also see a strong similarity at the observed group elements. For example, rotation 4-fold shows 4 minima at the observed (normalized) angles. These plots also help to understand how the model generalizes to unseen group elements. 
Interestingly, equivariance for horizontal flip is only mildly learnt due to its ambiguity in the dataset (see \Cref{sec:ambiguity} for extended discussion). 
Indeed, flipped images appear naturally in CIFAR-10 (\eg{} cars looking to the right or left), and thus there is more ambiguity about the meaning of image flipping. Vertical flips are nicely learnt, since they do not naturally appear in data. Another interesting observation is that grayscale yields a constant representation as we reduce the saturation (horizontal axis) and then shows a sudden jump close to 1 (grayscale image). The model has learnt that, as soon as the image presents \textit{some} hint of color, it is \textit{not} grayscale, unless it is \textit{purely} grayscale. Note also that grayscale does not form an algebraic group, yet \method is still able to learn its structure.

\begin{figure}[tb]
\caption{Equivariance in \method. We encode a test image (leftmost images), transform its representation using $T_g$ (\cref{eq:t_g}) for several $g$, and then decode the transformed representations. See how transforming the representations exposes the input transformation learnt by the model, empirically proving equivariance.}
\centering
\label{fig:decoded_mnist_cifar}
\vskip 1mm
\begin{subfigure}[b]{1.0\columnwidth}
    \includegraphics[height=22.5mm]{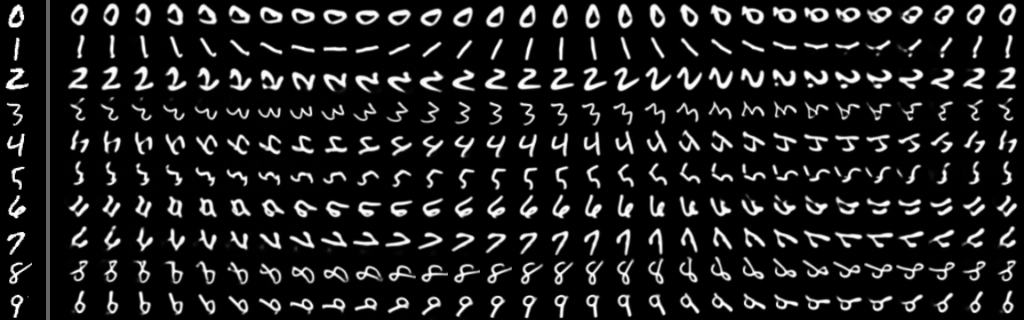}\hfill
    \includegraphics[height=22.5mm]{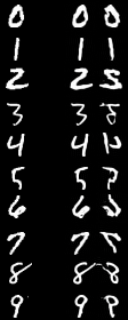}
    \caption{MNIST with \textit{Rot. (360)} (left) and \textit{horizontal flip} (right).}
\end{subfigure}
\vskip -1.7mm
\begin{subfigure}[b]{1.0\columnwidth}
    \includegraphics[width=\columnwidth]{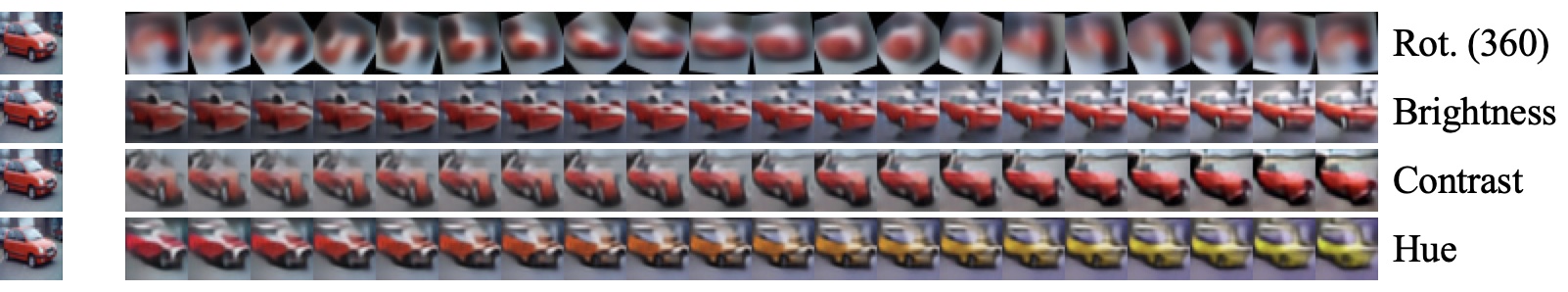}
    \caption{CIFAR-10 for \textit{Rot. (360)} and \textit{color} transformations.}
\end{subfigure}
\vskip -0.2in
\end{figure}

\subsection{\method Representations for Group Conditional Generation}
\label{sec:generation}

In \Cref{fig:decoded_mnist_cifar} we showcase the benefit of equivariance in \method representations to conditioning generation on specific group elements. 
To this end, we train a decoder on frozen pre-trained \method representations. 
In this work we do not aim to obtain state-of-the-art generation quality, but rather use a decoder for visual validation of our hypotheses. Note that group conditional generation is not feasible with MSSL methods like SimCLR or ESSL since there is no explicit transformation at representation level.

Here we exploit the equivariant property of \method representations for controlled generation. We first obtain the representation of a test image $\vz = f(\vx)$ (leftmost images in \Cref{fig:decoded_mnist_cifar}), then we create multiple transformed representations $\{T_g(\vz)\}$ using \cref{eq:t_g}, by sweeping $g$ between $0$ and $1$, and finally we decode all $\{T_g(\vz)\}$. In \Cref{fig:decoded_mnist_cifar} we show the decoded images for different datasets and groups. Notice how we can recover the input transformation by only transforming the representations, which provides yet an additional visual proof of equivariance in \method. In \Cref{app:rec_error} we show that the reconstruction error of \method is up to 66\% smaller than with SimCLR (rotation (4-fold)) and up to 70\% smaller than with ESSL (grayscale).

\subsection{\method Representations for Classification}
\label{sec:rep-cls}
\subsubsection{RRC+1 Experiments}
\label{RRC+1}

\begin{figure*}[tb]
\caption{Test top-1 performance of a linear tracking head on CIFAR-10. It can be observed that \method improves over SimCLR and ESSL for all transformations. Notably for continuous color transformations, ESSL significantly degrades performance unlike \method.}
\centering
\label{fig:cifar10_top1}
\includegraphics[width=0.9\textwidth]{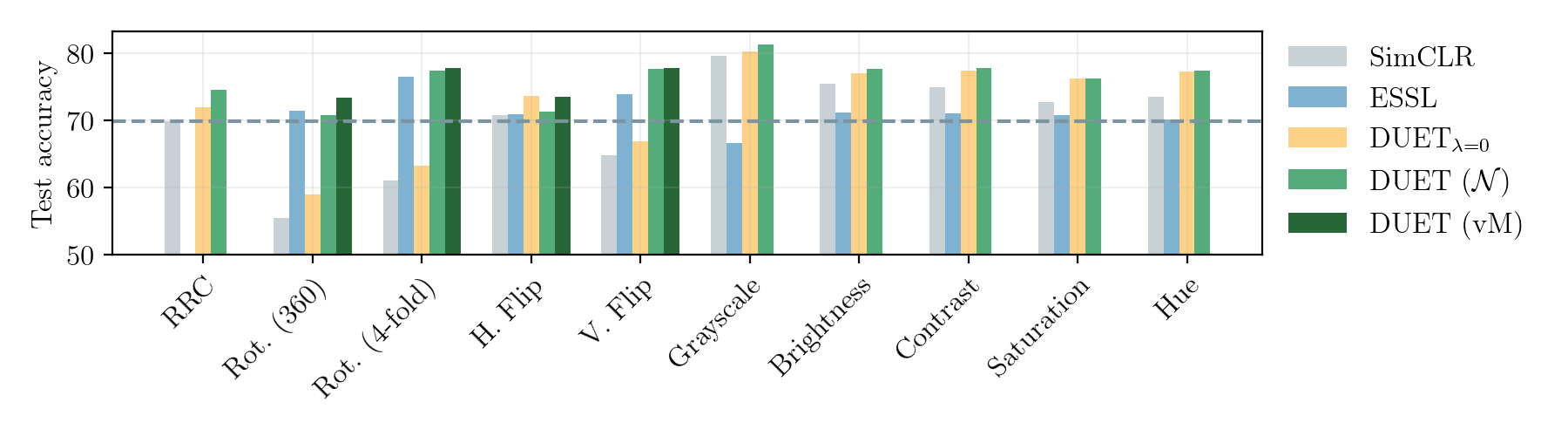}
\vskip -0.3in
\end{figure*}

In this section we analyze how \method representations perform for discriminative tasks. Following the procedure in the ESSL work, where a single transformation is applied on top of RandomResizedCrop (RRC), we carry out the set of {RRC+1} experiments. We compare our method with SimCLR and ESSL\footnote{ESSL representations are implicitly equivariant but do not guarantee interpretable structure with respect to the transformation.}.
We also compare with a variant of our method (coined \stovez) optimized without the group loss, that is with $\lambda=0$ in \Cref{{eq:full-loss}}. Notice that in \stovez we still reshape the features to 2d and sum over the columns to obtain the content representation (that is contrasted), which is a fundamental difference with SimCLR. \stovez learns unsupervised invariances very similarly to what NPTN \cite{nptn} does, but does not guarantee equivariance.
For RRC+1, \method uses $\lambda=10$, except for \textit{rotations} and \textit{vertical flip} for which we use $\lambda=1000$ according to the empirical study in \Cref{sec:ablation}. The remaining parameters are set to $\sigma=0.2$ and $\dg=8$. The full training procedure is provided in \Cref{sec:train_procedure}.

In \Cref{fig:cifar10_top1} we show the accuracy of a linear tracking head for the RRC+1 experiment on CIFAR-10. The horizontal dashed line shows the baseline performance of SimCLR with only RRC.
For all considered transformations, we show results from training with a $\mathcal{N}$ target group-marginal. For cyclic transformations (rotations and flips), we further specialize the target and consider a (periodic) $\VM$ distribution instead, reporting results for this case also.
It is important to point out that, for \method, the tracking head receives our 2d representation flattened, and as such is of the \textit{same dimensionality} as in compared methods. 

Our method outperforms SimCLR for all transformations, and even improves over SimCLR with RRC only by learning structure with respect to scale. Note that, by construction, ESSL cannot improve over the RRC-only baseline. A prominent result is the performance of \method with color transformations. For the discrete transformation \textit{grayscale}, ESSL degrades performance by 12.8\% with respect to SimCLR with grayscale, while \method improves it by 1.75\%.
For continuous color transformations, \method improves over SimCLR between 3-5\%, while ESSL degrades the performance by up to 4.3\% (brightness). This shows that the implicit equivariance in ESSL is not sufficient in this case.

We also observe in \Cref{fig:cifar10_top1} that ESSL does not improve over SimCLR for horizontal flips, while \method~($\VM$) improves by 2.8\%. In general, for ambiguous transformations like horizontal flip (see \Cref{sec:ambiguity}), we find that learning unsupervised structure with \stovez is beneficial.
We also see that structure (and equivariance) is strongly helpful for vertical flips.
For the more complex cyclic transformations, \stovez underperforms by a large margin, since such complex structure is harder to learn in a completely unsupervised way. This result shows that accounting for the topological structure of the transformation (as studied  by \citet{homeomorphic}) is of great importance, and opens the door to further research in this direction. 
Surprisingly, \stovez outperforms SimCLR. We speculate that the unsupervised structure learnt by \stovez might induce a more discriminative organization of the embedding space.

\Cref{tab:RRCp1} benchmarks the more complex tasks CIFAR-100 \cite{krizhevsky2009learning} and TinyImageNet \cite{tin}. We report the average across the cyclic groups and the color-related groups for better readability. \method achieves the highest accuracy compared to all algorithms tested, including \stovez, and across all groups but horizontal flip. \method also improves under color transformations with respect to SimCLR with the same transformations, while ESSL shows a degradation. Indeed, the datasets used in \Cref{tab:RRCp1} present higher data scarcity per class than CIFAR-10. In such setting, the structure learnt by \method shines over unstructured methods like ESSL.

\subsubsection{Full Augmentation Stack Experiments}
\label{sec:fullstack}

In this section we use the full augmentation stack as in SimCLR (see details in \Cref{app:fullstack-augs}). We learn structure for one group at a time, while applying the full stack on input images. Note that in the full stack setting, we use a fixed $\lambda=10$ for \method\footnote{We did not perform an extensive hyper-parameter tuning, the focus of this work being an exploration of structured representations in MSSL.}. We observed in this case that extremely large $\lambda$ can harm performance since multiple transformations add ambiguity to the group being learnt. 

\Cref{tab:fullstack} reports the test top-1 accuracy on CIFAR-10, CIFAR-100 and TinyImagenet. 
One interesting observation is that \method becomes better than the compared methods as the dataset complexity increases, achieving the best average accuracy for all sets of transformations on TinyImageNet. For smaller and simpler datasets like CIFAR-10, \method outperforms ESSL for color transformations, but ESSL is better for cyclic transformations. Still, \method outperforms the SimCLR baseline for cyclic transformations.

Interestingly, neither \method nor ESSL outperform SimCLR by becoming equivariant to horizontal flips, as discussed in \Cref{sec:ambiguity}. Nevertheless, \method still outperforms ESSL for horizontal flips by 0.86\%, 4.4\% and 4.18\% on CIFAR-10, CIFAR-100 and TinyImageNet respectively. Similarly, as the dataset complexity increases, \method performs better than ESSL for vertical flips.
Another interesting result is the effectiveness of ESSL with rotations, where \method remains subpar but better than the SimCLR baseline. 
The RRC column shows that \method, by just learning structure to scale (approximately, as explained in \Cref{sec:mssl_transformations}), can improve accuracy using the vanilla SimCLR augmentation stack.

It is surprising how well \stovez performs in the full stack setting, surpassing \method for simpler datasets. Indeed, \stovez learns an unsupervised structure, thus accounting for the interdependencies between the transformations applied. However, as observations of the transformation of interest are scarcer (\eg more complex datasets or less data per class) optimizing for a known structure is beneficial.

\addtolength{\tabcolsep}{-2pt}    
\begin{table*}[tb]
    \caption{RRC+1 results: Accuracy of a linear tracking head on CIFAR-100 and TinyImageNet. We also show the average over cyclic ($\VM$ target) and non-cyclic ($\mathcal{N}$ target) transformations. \method improves over SimCLR for all groups, while ESSL worsens performance for color transformations. We report the $\text{mean}_{\text{std}}$ over 3 runs.} 
    \label{tab:RRCp1}
    \scriptsize
    \vskip 0.1in
    \begin{tabularx}{\textwidth}{@{}Xlc!{\vrule width 1.5pt}cccc|c!{\vrule width 1.5pt}ccccc|c@{\hspace{0mm}}@{\hspace{0mm}}c}
    \toprule
    Dataset & Method &  RRC & Rot. (360) &   Rot. (4-fold) &         H. Flip &         V. Flip &    Avg.  &       Grayscale &              Brightness &                Contrast &             Saturation &                     Hue & Avg. \\
    \midrule

    \multirow{4}{*}{CIFAR-100}  & SimCLR               &           38.89\textsubscript{0.26} &           32.17\textsubscript{0.14} &           35.52\textsubscript{0.33} &           39.85\textsubscript{0.13} &           36.68\textsubscript{0.27} &    36.06\textsubscript{0.18}   &    47.41\textsubscript{0.40} &           45.00\textsubscript{0.26} &           44.51\textsubscript{0.04} &           42.27\textsubscript{0.70} &           43.61\textsubscript{0.19} &           44.56\textsubscript{0.26} \\
& ESSL                 &                        - &           38.03\textsubscript{0.36} &           44.36\textsubscript{0.72} &           38.78\textsubscript{0.54} &           42.17\textsubscript{0.87} &  40.84\textsubscript{0.51} &         34.20\textsubscript{0.60} &           38.33\textsubscript{0.25} &           38.66\textsubscript{0.02} &           38.92\textsubscript{0.08} &           37.64\textsubscript{0.39} &           37.55\textsubscript{0.22} \\
& \stovez   &           42.63\textsubscript{0.11} &           34.77\textsubscript{0.72} &           38.28\textsubscript{0.19} &  \textbf{43.54}\textsubscript{0.67} &           40.10\textsubscript{0.82} &  39.17\textsubscript{0.49} &           48.87\textsubscript{0.18} &           48.12\textsubscript{0.19} &           47.74\textsubscript{0.57} &           45.39\textsubscript{0.34} &           46.32\textsubscript{0.61} &           47.29\textsubscript{0.31} \\
& \method  &  \textbf{45.25}\textsubscript{0.10} &           \textbf{42.17}\textsubscript{0.42} &  \textbf{47.25}\textsubscript{0.26} &           41.82\textsubscript{0.46} &           \textbf{45.38}\textsubscript{0.73} &  \textbf{44.16}\textsubscript{0.38} &   \textbf{50.91}\textsubscript{0.49} &  \textbf{50.18}\textsubscript{0.45} &  \textbf{49.77}\textsubscript{0.46} &  \textbf{48.75}\textsubscript{0.22} &  \textbf{48.54}\textsubscript{0.78} & \textbf{49.63}\textsubscript{0.39} \\

    \midrule
    \multirow{4}{*}{TinyImageNet}   & SimCLR               &           26.91\textsubscript{0.13} &           21.34\textsubscript{0.08} &           24.40\textsubscript{0.24} &           27.90\textsubscript{0.37} &           26.74\textsubscript{0.30} &    25.09\textsubscript{0.20} &       31.35\textsubscript{1.29} &           29.95\textsubscript{0.14} &           29.68\textsubscript{0.18} &           28.60\textsubscript{0.29} &           28.20\textsubscript{0.40} &           29.55\textsubscript{0.38}  \\
& ESSL                 &                        - &           25.35\textsubscript{0.10} &           30.41\textsubscript{0.25} &           27.13\textsubscript{0.14} &        29.11\textsubscript{0.27} &      28.00\textsubscript{0.16} &     23.51\textsubscript{0.18} &           26.45\textsubscript{0.07} &           26.32\textsubscript{0.61} &           26.75\textsubscript{0.72} &           26.00\textsubscript{0.11} &           25.80\textsubscript{0.28}  \\
& \stovez   &           29.57\textsubscript{0.47} &           24.22\textsubscript{0.30} &           26.96\textsubscript{0.37} &  \textbf{30.78}\textsubscript{0.23} &           28.98\textsubscript{0.42} &   27.73\textsubscript{0.27} &        31.55\textsubscript{0.14} &           32.54\textsubscript{0.18} &           32.23\textsubscript{0.17} &           31.43\textsubscript{0.24} &           30.45\textsubscript{0.60} &           31.64\textsubscript{0.22} \\
& \method &  \textbf{31.26}\textsubscript{0.21} &           \textbf{27.78}\textsubscript{0.24} &  \textbf{31.55}\textsubscript{0.28} &           30.34\textsubscript{0.59} &           \textbf{31.71}\textsubscript{0.53} & \textbf{30.34}\textsubscript{0.33} & \textbf{34.92}\textsubscript{0.24} &  \textbf{33.96}\textsubscript{0.16} &  \textbf{34.20}\textsubscript{0.34} &  \textbf{33.42}\textsubscript{0.35} &  \textbf{32.64}\textsubscript{0.03} &  \textbf{33.83}\textsubscript{0.18} \\

    \bottomrule
\end{tabularx}
\end{table*}
\addtolength{\tabcolsep}{1pt}

\addtolength{\tabcolsep}{-1pt}    
\begin{table*}[tb]
    \caption{Full Stack results. We show the average accuracy of a linear tracking head over cyclic ($\VM$ target) and non-cyclic ($\mathcal{N}$ target) transformations. As the task complexity increases, \method achieves better accuracy than the compared methods. Columns \textit{Rot. (360)}, \textit{Rot. (4-fold)} and \textit{V. Flip} require an additional transformation. We report the $\text{mean}_{\text{std}}$ over 3 runs.}
    \label{tab:fullstack}
    \scriptsize
    \vskip 0.1in
    \begin{tabularx}{\textwidth}{@{}Xlc!{\vrule width 1.5pt}cccc|c!{\vrule width 1.5pt}ccccc|c@{\hspace{0mm}}@{\hspace{0mm}}c}
    \toprule
    Dataset & Method &  RRC & Rot. (360) &   Rot. (4-fold) &         H. Flip &         V. Flip &   Avg. &            Grayscale &              Brightness &                Contrast &             Saturation &                     Hue & Avg. \\
    \midrule
    \multirow{4}{*}{CIFAR-10} & SimCLR               &  87.42\textsubscript{0.01} &           79.90\textsubscript{0.50} &           81.50\textsubscript{0.44} &  87.48\textsubscript{0.06} &           82.78\textsubscript{0.23} & 82.92\textsubscript{0.22} & 87.41\textsubscript{0.03} &  \textbf{87.51}\textsubscript{0.11} &  \textbf{87.57}\textsubscript{0.19} &  87.49\textsubscript{0.08} &           87.67\textsubscript{0.34} &           \textbf{87.53}\textsubscript{0.11}\\
& ESSL                 &               - &  \textbf{86.55}\textsubscript{0.13} &  \textbf{89.33}\textsubscript{0.32} &  84.78\textsubscript{0.40} &  \textbf{86.66}\textsubscript{0.21} &  \textbf{86.83}\textsubscript{0.22} & 83.59\textsubscript{0.43} &  85.78\textsubscript{0.25} &  86.31\textsubscript{0.16} &  87.12\textsubscript{0.30} &           86.39\textsubscript{0.44} &  85.84\textsubscript{0.26} \\
& \stovez   &  \textbf{87.50}\textsubscript{0.20} &           79.05\textsubscript{0.37} &           81.32\textsubscript{0.18} &  \textbf{87.73}\textsubscript{0.19} &           82.66\textsubscript{0.14} & 82.69\textsubscript{0.22} &  \textbf{87.69}\textsubscript{0.17} &           87.47\textsubscript{0.13} &           87.34\textsubscript{0.20} &  \textbf{87.54}\textsubscript{0.33} &           87.63\textsubscript{0.20} &           \textbf{87.53}\textsubscript{0.20} \\
& \method  &  87.22\textsubscript{0.10} &           81.70\textsubscript{0.30} &           83.49\textsubscript{0.16} &  85.64\textsubscript{0.08} &           83.84\textsubscript{0.22} &  83.67\textsubscript{0.15} & 87.40\textsubscript{0.08} &  86.97\textsubscript{0.22} &  87.05\textsubscript{0.37} &  87.52\textsubscript{0.19} &  \textbf{87.97}\textsubscript{0.11} &           87.38\textsubscript{0.16} \\

    \midrule
    \multirow{4}{*}{CIFAR-100}  & SimCLR               &           61.40\textsubscript{0.17} &           56.40\textsubscript{0.30} &           57.32\textsubscript{0.03} &  61.48\textsubscript{0.28} &           56.73\textsubscript{0.48} &     57.98\textsubscript{0.19} &       61.43\textsubscript{0.21} &           61.31\textsubscript{0.05} &  61.68\textsubscript{0.57} &           61.57\textsubscript{0.41} &           61.30\textsubscript{0.03} &          61.46\textsubscript{0.18}  \\
& ESSL                 &                        - &  \textbf{58.32}\textsubscript{0.06} &  \textbf{63.28}\textsubscript{0.28} &  55.22\textsubscript{0.29} &           57.18\textsubscript{0.18} &     \textbf{58.50}\textsubscript{0.17} &       55.10\textsubscript{0.47} &           57.92\textsubscript{0.38} &  58.06\textsubscript{0.58} &           60.25\textsubscript{0.34} &           58.91\textsubscript{0.30} &          58.05\textsubscript{0.34} \\
& \stovez   &           62.13\textsubscript{0.13} &           55.49\textsubscript{0.22} &           57.79\textsubscript{0.40} &  \textbf{62.25}\textsubscript{0.34} &           56.88\textsubscript{0.18} & 58.10\textsubscript{0.29} &  \textbf{62.32}\textsubscript{0.26} &  \textbf{62.39}\textsubscript{0.27} &  \textbf{62.47}\textsubscript{0.16} &           62.54\textsubscript{0.20} &           62.29\textsubscript{0.29} &           62.40\textsubscript{0.24}  \\
& \method  &  \textbf{62.17}\textsubscript{0.28} &           55.66\textsubscript{0.39} &           58.01\textsubscript{0.31} &  59.62\textsubscript{0.11} &           \textbf{57.40}\textsubscript{0.15} & 57.67\textsubscript{0.20} &  {62.18}\textsubscript{0.51} &  {62.24}\textsubscript{0.31} &  61.90\textsubscript{0.72} &  \textbf{62.67}\textsubscript{0.19} &  \textbf{63.31}\textsubscript{0.21} & \textbf{62.46}\textsubscript{0.32}   \\

    \midrule
    \multirow{4}{*}{TinyImageNet}   & SimCLR               &           42.16\textsubscript{0.16} &           37.35\textsubscript{0.19} &           39.23\textsubscript{0.15} &  {42.31}\textsubscript{0.06} &    39.35\textsubscript{0.09} &            38.50\textsubscript{0.09} &           42.11\textsubscript{0.23} &           42.32\textsubscript{0.08} &           42.34\textsubscript{0.10} &           42.27\textsubscript{0.01} &           42.46\textsubscript{0.27} &           42.30\textsubscript{0.10}  \\
& ESSL                 &                        - &           37.53\textsubscript{0.21} &  \textbf{42.86}\textsubscript{0.29} &           36.25\textsubscript{0.13} &           37.18\textsubscript{0.77} &   38.46\textsubscript{0.29} &             35.50\textsubscript{0.30} &           37.94\textsubscript{0.13} &           38.66\textsubscript{0.50} &           40.55\textsubscript{0.74} &           40.49\textsubscript{0.05} &           38.63\textsubscript{0.28}  \\
& \stovez  &           43.07\textsubscript{0.11} &           36.28\textsubscript{0.95} &           39.54\textsubscript{0.41} &  \textbf{42.43}\textsubscript{0.33} &           38.87\textsubscript{0.40} &  39.28\textsubscript{0.52} &      \textbf{42.79}\textsubscript{0.16} &           42.61\textsubscript{0.34} &           42.98\textsubscript{0.08} &           42.86\textsubscript{0.30} &           42.90\textsubscript{0.46} &           42.83\textsubscript{0.27}  \\
& \method  &  \textbf{43.56}\textsubscript{0.54} &   \textbf{38.06}\textsubscript{0.06} &           40.03\textsubscript{0.28} &           40.43\textsubscript{0.21} &  \textbf{39.36}\textsubscript{0.50} &      \textbf{39.47}\textsubscript{0.18} &          42.55\textsubscript{1.27} &  \textbf{43.41}\textsubscript{0.01} &  \textbf{43.71}\textsubscript{0.07} &  \textbf{44.13}\textsubscript{0.57} &  \textbf{44.61}\textsubscript{0.10} &  \textbf{43.68}\textsubscript{0.29} \\

    \bottomrule
\end{tabularx}
\end{table*}
\addtolength{\tabcolsep}{1pt}

\subsection{Transfer to Other Datasets}
\label{sec:transfer}


\method's structure to rotations yields a gain of +21\% with respect to SimCLR when transferring to Caltech101 \cite{caltech101}, and between +5.97\% and +16.97\% when transferring to other datasets like CIFAR-10, CIFAR-100, DTD \cite{dtd} or Oxford Pets \cite{pets}. Structure to color transformations also proves beneficial, with a +6.36\% gain on Flowers \cite{flowers} (grayscale), Food101 \cite{food101} (hue) and +7.13\% on CIFAR-100 (hue). Horizontal flip is the transformation that sees less gain due to its ambiguity, as discussed in \Cref{sec:ambiguity}.

\section{Discussion and Limitations}

\paragraph{On the Dimensionality of \method Representations.}
We reshape the output of the backbone ($\R^D$) to $\vz \in \R^{\dc \times \dg}$. The final representation used for downstream tasks is a flattened ($\R^D$) version of $\vz$. For a fair comparison, SimCLR and ESSL also yield $\R^D$ representations.

\paragraph{Trading off Structure and Expressivity.} By increasing $\dg$ we reduce the effective dimensionality of the content representations ($\R^C$) contrasted through $L_{\sC}$. This implies a trade-off between structure (improves generation, transferrability) and expressivity (improves discrimination). 
Such effect is visible in the transfer learning results, where learning structure to rotation is not useful when transferring to the Flowers dataset. Indeed, such dataset contains many circular flowers, which
are rotation (and flip) invariant.
\paragraph{Transformation Ambiguity.}
\label{sec:ambiguity}
A dataset containing examples related by input transformations results in \emph{transformation ambiguity}, and the distribution over group actions 
$P(\rg|\vx_k)$
becomes multi-modal. This is shown in \Cref{fig:p_g_flips}
where the weight of each mode corresponds to the observed probability in the dataset, i.e., 
$P(\rg|\vx_k)$
reflects the bias of the dataset with respect to the transformation. Additional results in \Cref{app:p_g} show ambiguity also for color transformations, \eg natural images may present a different default hue, yielding a spread $P(\rg|\vx_k)$.
This phenomenon is also observed in \Cref{sec:fullstack} and \Cref{sec:transfer}, where the notion of a left-flipped image is ambiguous, whereas a vertically flipped image is not, and only the latter transformation yielded a performance gap between equivariant and invariant methods.

\begin{figure}[tb]
\caption{Observed $P(\rg | \vx)$ for horizontal (left) and vertical (right) flips, obtained from 1000 CIFAR-10 images. Note the inherent ambiguity for horizontal flips. Also, see that the modes of the distributions correspond to the mapped points specified in \Cref{tab:groups}.}
\centering
\label{fig:p_g_flips}
\vskip 1mm
\begin{subfigure}[b]{1.0\columnwidth}
    \includegraphics[width=0.48\columnwidth]{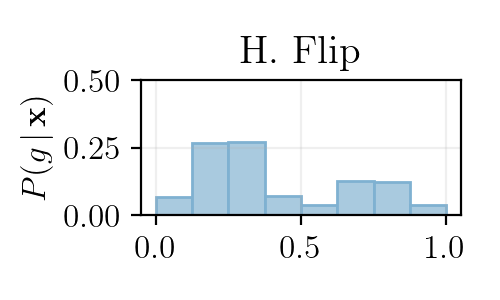}\hfill
    \includegraphics[width=0.48\columnwidth]{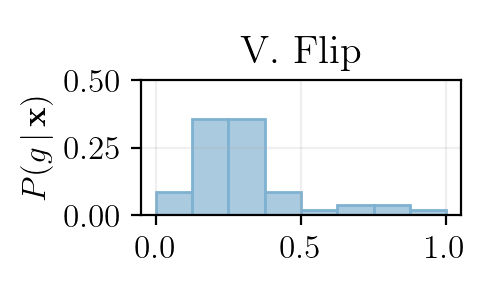}
\end{subfigure}
\vskip -0.3in
\end{figure}


\paragraph{Are $\rc$ and $\rg$ Dependent?}
To better understand the dependency between $\rc$ and $\rg$ (conditioned on $\vx_k$) quantitatively, we measure the difference $\Delta P = \| P(\rc, \rg | \vx_k) - P(\rc | \vx_k)P( \rg | \vx_k)\|_2^2$. In \Cref{tab:dependency_c_g} we report the average difference for the \method representations of 100 images from CIFAR-10, for 100 images with independent and identically distributed (iid) pixels and for 100 random representations (iid features). Note that such difference is expected to be 0 for the random representations (independent) and close to 0 for the iid pixels (no symmetries in the data).

\begin{table}[tb]
    \caption{Dependence of $\rc$ and $\rg$ conditioned on $\vx_k$. The learnt marginal representations for content ($\rc$) and group element ($\rg$) are dependent. This is a core strength of \method, where group structure and content are not assumed independent, but rather with specific dependencies learnt from data.} 
    \label{tab:dependency_c_g}
    \footnotesize
    \centering
    \vskip 0.1in
    \begin{tabular}{lc}
    \toprule
     & $\Delta P$ \\
    \midrule
    \vspace{1mm}
    \method w/ CIFAR-10 & 178.17 \\
    \method w/ iid pixels & 0.015 \vspace{1mm} \\
    iid representations & 0.00075 \\
    \bottomrule
\end{tabular}
\end{table}

\paragraph{Computational Requirements}
The training time of \mbox{SimCLR} and \method are practically the same. \method's extra requirements suppose a negligible overhead, namely: are a sum over rows and cols of $\vz$ and the computation of the Jensen-Shannon Divergence in $L_{\sC}$. Interestingly, the projection head $h$ in \method is smaller than in SimCLR, since the content features are of lower dimension, effectively reducing the model parameters with respect to SimCLR.

Compared to ESSL, DUET shows an important computational gain. Indeed, the time required for ESSL to train depends on the group chosen. Taking the implementation in \cite{essl} for 4-fold rotations, the backbone consumes $2+4$ versions of each image, resulting in an overall training time $2.01\times$ longer than that of \method. For other transformations, ESSL requires $2+2$ images being consumed (\eg flips) or $2+1$ (\eg contrast); thus resulting in longer training time than DUET in all cases.

\section{Conclusion}

We introduce \method, a method to learn structured and equivariant representations using MSSL. \method uses 2d  representations that model the joint distribution between input content and the group element acting on the input. 
\method representations, optimized through the content and group element marginal distributions, become structured and equivariant to the group elements. 
We design an explicit form of transformation at representation level that allows exploiting equivariance for controlled generation. 
Our results show that \method representations are expressive for generative purposes (lower reconstruction error) and also for discriminative purposes. 
Overall, this work shows that accounting for the topological structure of input transformations is of great importance to improve generalization in MSSL.
\section{Acknowledgements}
\label{sec:acknowledgements}

We thank
Adam Goli\'{n}ski,
Eeshan Gunesh Dhekane,
Pau Rodríguez López,
Miguel Sarabia del Castillo,
Josh Susskind,
Tatiana Likhomanenko and
Russ Webb
for their helpful feedback and critical discussions throughout the process of writing this paper; as well as 
Nick Apostoloff and 
Jerremy Holland
for supporting this research.
Names are in alphabetical order by last name within group.

\bibliography{egbib}

\begin{thebibliography}{45}
\providecommand{\natexlab}[1]{#1}
\providecommand{\url}[1]{\texttt{#1}}
\expandafter\ifx\csname urlstyle\endcsname\relax
  \providecommand{\doi}[1]{doi: #1}\else
  \providecommand{\doi}{doi: \begingroup \urlstyle{rm}\Url}\fi

\bibitem[Bachman et~al.(2019)Bachman, Hjelm, and Buchwalter]{admim}
Bachman, P., Hjelm, R.~D., and Buchwalter, W.
\newblock Learning representations by maximizing mutual information across
  views.
\newblock In \emph{NeurIPS}, volume~32, 2019.

\bibitem[Bossard et~al.(2014)Bossard, Guillaumin, and Van~Gool]{food101}
Bossard, L., Guillaumin, M., and Van~Gool, L.
\newblock Food-101 -- mining discriminative components with random forests.
\newblock In \emph{ECCV}, 2014.

\bibitem[Caron et~al.(2020)Caron, Misra, Mairal, Goyal, Bojanowski, and
  Joulin]{swav}
Caron, M., Misra, I., Mairal, J., Goyal, P., Bojanowski, P., and Joulin, A.
\newblock Unsupervised learning of visual features by contrasting cluster
  assignments.
\newblock \emph{NeurIPS}, 2020.

\bibitem[Chen et~al.(2020)Chen, Kornblith, Norouzi, and Hinton]{simclr}
Chen, T., Kornblith, S., Norouzi, M., and Hinton, G.
\newblock A simple framework for contrastive learning of visual
  representations.
\newblock \emph{ICML}, 2020.

\bibitem[Cimpoi et~al.(2014)Cimpoi, Maji, Kokkinos, Mohamed, , and
  Vedaldi]{dtd}
Cimpoi, M., Maji, S., Kokkinos, I., Mohamed, S., , and Vedaldi, A.
\newblock Describing textures in the wild.
\newblock In \emph{CVPR}, 2014.

\bibitem[Cohen \& Welling(2016)Cohen and Welling]{gcnn}
Cohen, T. and Welling, M.
\newblock Group equivariant convolutional networks.
\newblock In \emph{ICML}, pp.\  2990--2999. PMLR, 2016.

\bibitem[Cotogni \& Cusano(2022)Cotogni and Cusano]{offseteq}
Cotogni, M. and Cusano, C.
\newblock Offset equivariant networks and their applications.
\newblock \emph{Neurocomputing}, 502:\penalty0 110--119, 2022.

\bibitem[Dangovski et~al.(2022)Dangovski, Jing, Loh, Han, Srivastava, Cheung,
  Agrawal, and Solja{\v{c}}i{\'c}]{essl}
Dangovski, R., Jing, L., Loh, C., Han, S., Srivastava, A., Cheung, B., Agrawal,
  P., and Solja{\v{c}}i{\'c}, M.
\newblock Equivariant contrastive learning.
\newblock \emph{ICLR}, 2022.

\bibitem[Falorsi et~al.(2018)Falorsi, de~Haan, Davidson, De~Cao, Weiler,
  Forré, and Cohen]{homeomorphic}
Falorsi, L., de~Haan, P., Davidson, T.~R., De~Cao, N., Weiler, M., Forré, P.,
  and Cohen, T.~S.
\newblock Explorations in homeomorphic variational auto-encoding.
\newblock \emph{arXiv preprint arXiv:1807.04689}, 2018.

\bibitem[Grill et~al.(2020)Grill, Strub, Altché, Tallec, Richemond,
  Buchatskaya, Doersch, Pires, Guo, Azar, Piot, Kavukcuoglu, Munos, and
  Valko]{byol}
Grill, J.-B., Strub, F., Altché, F., Tallec, C., Richemond, P.~H.,
  Buchatskaya, E., Doersch, C., Pires, B.~A., Guo, Z.~D., Azar, M.~G., Piot,
  B., Kavukcuoglu, K., Munos, R., and Valko, M.
\newblock Bootstrap your own latent: A new approach to self-supervised
  learning.
\newblock \emph{NeurIPS}, 2020.

\bibitem[Gutmann \& Hyvärinen(2010)Gutmann and Hyvärinen]{nce}
Gutmann, M. and Hyvärinen, A.
\newblock Noise-contrastive estimation: A new estimation principle for
  unnormalized statistical models.
\newblock In \emph{Proceedings of the Thirteenth International Conference on
  Artificial Intelligence and Statistics}, pp.\  297--304, 2010.

\bibitem[He et~al.(2016)He, Zhang, Ren, and Sun]{resnet}
He, K., Zhang, X., Ren, S., and Sun, J.
\newblock {Deep Residual Learning for Image Recognition}.
\newblock In \emph{CVPR}, 2016.

\bibitem[He et~al.(2019)He, Fan, Wu, Xie, and Girshick]{moco}
He, K., Fan, H., Wu, Y., Xie, S., and Girshick, R.
\newblock Momentum contrast for unsupervised visual representation learning.
\newblock \emph{CVPR}, 2019.

\bibitem[Higgins et~al.(2017)Higgins, Matthey, Pal, Burgess, Glorot, Botvinick,
  Mohamed, and Lerchner]{betavae}
Higgins, I., Matthey, L., Pal, A., Burgess, C., Glorot, X., Botvinick, M.,
  Mohamed, S., and Lerchner, A.
\newblock beta-{VAE}: Learning basic visual concepts with a constrained
  variational framework.
\newblock In \emph{ICLR}, 2017.

\bibitem[Hjelm et~al.(2019)Hjelm, Fedorov, Lavoie-Marchildon, Grewal, Bachman,
  Trischler, and Bengio]{deepinfomax}
Hjelm, R.~D., Fedorov, A., Lavoie-Marchildon, S., Grewal, K., Bachman, P.,
  Trischler, A., and Bengio, Y.
\newblock Learning deep representations by mutual information estimation and
  maximization.
\newblock \emph{ICLR}, 2019.

\bibitem[Huang et~al.(2023)Huang, Goh, Gu, and Susskind]{mast}
Huang, C., Goh, H., Gu, J., and Susskind, J.
\newblock Mast: Masked augmentation subspace training for generalizable
  self-supervised priors.
\newblock In \emph{ICLR}, pp.\  297--304, 2023.

\bibitem[Ioffe \& Szegedy(2015)Ioffe and Szegedy]{batchnorm}
Ioffe, S. and Szegedy, C.
\newblock Batch normalization: Accelerating deep network training by reducing
  internal covariate shift.
\newblock In \emph{ICML}, volume~37, pp.\  448--456, 2015.

\bibitem[Jiao \& Henriques(2021)Jiao and Henriques]{qtae}
Jiao, J. and Henriques, J.~F.
\newblock Quantised transforming auto-encoders: Achieving equivariance to
  arbitrary transformations in deep networks.
\newblock In \emph{BMVC}, 2021.

\bibitem[Keller \& Welling(2021)Keller and Welling]{tvae}
Keller, T.~A. and Welling, M.
\newblock Topographic {VAE}s learn equivariant capsules.
\newblock In Beygelzimer, A., Dauphin, Y., Liang, P., and Vaughan, J.~W.
  (eds.), \emph{Advances in Neural Information Processing Systems}, 2021.

\bibitem[Keller et~al.(2022)Keller, Suau, and Zappella]{hssl}
Keller, T.~A., Suau, X., and Zappella, L.
\newblock Homomorphic self-supervised learning.
\newblock \emph{NeurIPS SSL Workshop}, 2022.

\bibitem[Kim et~al.(2021)Kim, Kim, and Lee]{hybgencon}
Kim, S., Kim, S., and Lee, J.
\newblock Hybrid generative-contrastive representation learning.
\newblock \emph{ICLR}, 2021.

\bibitem[Kingma \& Welling(2014)Kingma and Welling]{Kingma2014}
Kingma, D.~P. and Welling, M.
\newblock {Auto-Encoding Variational Bayes}.
\newblock In \emph{ICLR}, 2014.

\bibitem[Krizhevsky(2009)]{krizhevsky2009learning}
Krizhevsky, A.
\newblock Learning multiple layers of features from tiny images.
\newblock pp.\  32--33, 2009.

\bibitem[Kumar et~al.(2018)Kumar, Sattigeri, and
  Balakrishnan]{kumar2018variational}
Kumar, A., Sattigeri, P., and Balakrishnan, A.
\newblock Variational inference of disentangled latent concepts from unlabeled
  observations.
\newblock In \emph{ICLR}, 2018.

\bibitem[Laptev et~al.(2016)Laptev, Savinov, Buhmann, and Pollefeys]{tipooling}
Laptev, D., Savinov, N., Buhmann, J.~M., and Pollefeys, M.
\newblock Ti-pooling: transformation-invariant pooling for feature learning in
  convolutional neural networks, 2016.

\bibitem[Lee et~al.(2021)Lee, Lee, Lee, Lee, and Shin]{augself}
Lee, H., Lee, K., Lee, K., Lee, H., and Shin, J.
\newblock Improving transferability of representations via augmentation-aware
  self-supervision.
\newblock \emph{NeurIPS}, 2021.

\bibitem[Li et~al.(2017)Li, Karpathy, and Johnson]{tin}
Li, F.-F., Karpathy, A., and Johnson, J.
\newblock cs231n course at stanford university, 2017.
\newblock URL \url{https://www.kaggle.com/c/tiny-imagenet}.

\bibitem[Li et~al.(2022)Li, Andreeto, Ranzato, and Perona]{caltech101}
Li, F.-F., Andreeto, M., Ranzato, M., and Perona, P.
\newblock Caltech 101, 2022.

\bibitem[Li et~al.(2020)Li, Fan, Yuan, He, Tian, Feris, Indyk, and
  Katabi]{pcl_feature_suppression}
Li, T., Fan, L., Yuan, Y., He, H., Tian, Y., Feris, R., Indyk, P., and Katabi,
  D.
\newblock Addressing feature suppression in unsupervised visual
  representations.
\newblock \emph{arXiv preprint arXiv:2012.09962}, 2020.

\bibitem[Linsker(1988)]{InfoMax}
Linsker, R.
\newblock Self-organization in a perceptual network.
\newblock \emph{Computer}, 21\penalty0 (3):\penalty0 105--117, 1988.

\bibitem[L{\"o}we et~al.(2019)L{\"o}we, O'Connor, and Veeling]{gim}
L{\"o}we, S., O'Connor, P., and Veeling, B.
\newblock Putting an end to end-to-end: Gradient-isolated learning of
  representations.
\newblock In \emph{NeurIPS}, 2019.

\bibitem[MacDonald et~al.(2022)MacDonald, Ramasinghe, and Lucey]{arblie}
MacDonald, L.~E., Ramasinghe, S., and Lucey, S.
\newblock Enabling equivariance for arbitrary lie groups.
\newblock pp.\  8183--8192, 2022.

\bibitem[Oord et~al.(2018)Oord, Li, and Vinyals]{cpc}
Oord, A. v.~d., Li, Y., and Vinyals, O.
\newblock Representation learning with contrastive predictive coding.
\newblock \emph{NeurIPS}, 2018.

\bibitem[Pal \& Savvides(2018)Pal and Savvides]{nptn}
Pal, D.~K. and Savvides, M.
\newblock Non-parametric transformation networks, 2018.

\bibitem[Parkhi et~al.(2012)Parkhi, Vedaldi, Zisserman, and Jawahar]{pets}
Parkhi, O.~M., Vedaldi, A., Zisserman, A., and Jawahar, C.~V.
\newblock Cats and dogs.
\newblock In \emph{CVPR}, 2012.

\bibitem[Serre(1977)]{serre}
Serre, J.-P.
\newblock \emph{Linear representations of finite groups.}, volume~42 of
  \emph{Graduate texts in mathematics}.
\newblock Springer, 1977.

\bibitem[Sosnovik et~al.(2020)Sosnovik, Szmaja, and Smeulders]{sesn}
Sosnovik, I., Szmaja, M., and Smeulders, A.
\newblock Scale-equivariant steerable networks.
\newblock In \emph{International Conference on Learning Representations}, 2020.

\bibitem[Stühmer et~al.(2020)Stühmer, Turner, and Nowozin]{isa_vae}
Stühmer, J., Turner, R., and Nowozin, S.
\newblock Independent subspace analysis for unsupervised learning of
  disentangled representations.
\newblock In \emph{AISTATS}, 2020.

\bibitem[Tian et~al.(2020)Tian, Sun, Poole, Krishnan, Schmid, and
  Isola]{makes_good_views}
Tian, Y., Sun, C., Poole, B., Krishnan, D., Schmid, C., and Isola, P.
\newblock What makes for good views for contrastive learning?
\newblock \emph{NeurIPS}, 2020.

\bibitem[Tschannen et~al.(2019)Tschannen, Djolonga, Rubenstein, Gelly, and
  Lucic]{MI_Max}
Tschannen, M., Djolonga, J., Rubenstein, P.~K., Gelly, S., and Lucic, M.
\newblock On mutual information maximization for representation learning, 2019.

\bibitem[Tung(2020)]{flowers}
Tung, K.
\newblock {Flowers Dataset}, 2020.
\newblock URL \url{https://doi.org/10.7910/DVN/1ECTVN}.

\bibitem[Wang \& Isola(2020)Wang and Isola]{DBLP:conf/icml/0001I20}
Wang, T. and Isola, P.
\newblock Understanding contrastive representation learning through alignment
  and uniformity on the hypersphere.
\newblock In \emph{ICML}, volume 119, pp.\  9929--9939. {PMLR}, 2020.

\bibitem[Wu \& He(2018)Wu and He]{groupnorm}
Wu, Y. and He, K.
\newblock Group normalization.
\newblock In \emph{ECCV}, pp.\  3--19, 2018.

\bibitem[Zbontar et~al.(2021)Zbontar, Jing, Misra, LeCun, and Deny]{btwins}
Zbontar, J., Jing, L., Misra, I., LeCun, Y., and Deny, S.
\newblock Barlow twins: Self-supervised learning via redundancy reduction.
\newblock \emph{ICML}, 2021.

\bibitem[Zimmermann et~al.(2021)Zimmermann, Sharma, Schneider, Bethge, and
  Brendel]{DBLP:conf/icml/ZimmermannSSBB21}
Zimmermann, R.~S., Sharma, Y., Schneider, S., Bethge, M., and Brendel, W.
\newblock Contrastive learning inverts the data generating process.
\newblock In Meila, M. and Zhang, T. (eds.), \emph{ICML}, volume 139 of
  \emph{Proceedings of Machine Learning Research}, pp.\  12979--12990. {PMLR},
  2021.

\end{thebibliography}
\bibliographystyle{icml2023}

\newpage
\appendix
\onecolumn
\begin{section}{Bounding the Equivariance Error}
\label{sec:equivarianceproof}

To introduce some notation, let us assume that, for an input data point $\vx_0$, the training procedure has ``seen'' the augmentations $\vx_i=\tau_{g_i}(\vx_0)$, generating the respective representations $\vz_i=f(\vx_i)$ in feature space. Notice that, with some abuse of notation
, in this scenario we consider $\vz$ to be the column reduction of the feature space, since that is the only part dedicated to guaranteeing equivariance. Since we are at the optimum, these must produce group marginal distributions $Q_j(z_i)\equiv \hat{Q}_{g_i}$, where $\hat{Q}_{g_i}$ represents the discretization of the target distribution with mean $g_i$. At the end of training, if exact equivariance is reached (i.e. if $L_\sG$ is minimized), a newly generated augmentation $\vx=\tau_g(\vx_0)$ for a given transformation parameter $g$, would be mapped by our neural network to the feature vector $\vz$, such that $Q(\vz)\equiv\hat{Q}_{g}$. Since this augmentation was not seen during training time, however, this is not guaranteed. We are interested in providing a bound on the error between the representation actually recovered, and the ideal one, which gives us an indication of how much our neural network can violate equivariance for unseen transformation parameters $g$. This is given by the following theorem.
\begin{theorem}
\label{thm:equivariancebound}
    For a training point $\vx_0$, at the optimum of $L_\sG = 0$, the equivariance error of a neural network $f$ trained with loss \Cref{eq:loss_g} is bounded by
    \begin{equation}
        \|f(\tau_g(\vx_0)) - T_g(f(\vx_0)) \| \leq (L_f L_{\tau_g} + L_{T_g})\min_i|g-g_i|,
    \end{equation}
    where $L_{T_g}$, $L_{\tau_g}$ and $L_{f}$ are the Lipschitz constants associated with the transformations $T_g$, $\tau_g$, and the network $f$, respectively.
\end{theorem}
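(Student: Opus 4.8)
The plan is to anchor equivariance at the transformation parameters seen during training, and then propagate the error to an unseen $g$ by a triangle inequality combined with Lipschitz continuity in the transformation parameter.

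\textbf{Step 1: exact equivariance at the seen $g_i$.} First I would show that at the optimum $L_\sG = 0$ we have $f(\tau_{g_i}(\vx_0)) = T_{g_i}(f(\vx_0))$, understood (as in the setup) at the level of the column reduction $\vz$. At $L_\sG=0$ every $\JSD$ term vanishes, so $P(\rg\,|\,\vx_i) = Q(\rg\,|\,\vx_i) = \hat Q_{g_i}$; hence the softmax of the column sums $\{\mu^{(i)}_j\}$ of $f(\tau_{g_i}(\vx_0))$ equals $\hat Q_{g_i}$. By construction of $T_g$ in \Cref{eq:t_g} and the solution \Cref{eq:muhat}, the column sums of $T_{g_i}(f(\vx_0))$ are exactly the $\hat\mu_j$ whose softmax is also $\hat Q_{g_i}$. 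Softmax becomes injective once the sum of its logits is pinned down, and the final BN layer fixes both $\sum_j \mu^{(i)}_j$ and $\sum_j \hat\mu_j$ to the same value (the $\beta_j$ constraint used to pick $\hat\mu_j$). Therefore the two column reductions coincide, i.e.\ equivariance is exact at every $g_i$.

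\textbf{Step 2: propagate to an unseen $g$.} Fix an arbitrary $g$ and let $g_i$ attain $\min_i|g-g_i|$. Insert $f(\tau_{g_i}(\vx_0)) = T_{g_i}(f(\vx_0))$ and apply the triangle inequality:
\[
\|f(\tau_g(\vx_0)) - T_g(f(\vx_0))\| \le \|f(\tau_g(\vx_0)) - f(\tau_{g_i}(\vx_0))\| + \|T_{g_i}(f(\vx_0)) - T_g(f(\vx_0))\|.
\]
Bound the first term by composing the Lipschitz constant $L_f$ of the network with the Lipschitz constant $L_{\tau_g}$ of the map $g\mapsto\tau_g(\vx_0)$, giving $L_f L_{\tau_g}\,|g-g_i|$; bound the second term by the Lipschitz constant $L_{T_g}$ of the map $g\mapsto T_g(f(\vx_0))$, giving $L_{T_g}\,|g-g_i|$. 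Summing and using that $g_i$ was chosen as the minimizer yields $(L_f L_{\tau_g} + L_{T_g})\min_i|g-g_i|$, as claimed.

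\textbf{Main obstacle.} The delicate step is Step 1: making the claim ``equivariance is exact at the seen $g_i$'' precise and honest. It rests on (a) reading $\vz$ as the column reduction throughout, as flagged in the setup, since feature-space equivariance is not guaranteed; and (b) resolving the non-injectivity of softmax, which works only because both the learned column sums and the prescribed $\hat\mu_j$ are constrained to the same affine hyperplane by the BN bias --- exactly the design choice behind \Cref{eq:muhat}. Everything after that is a routine triangle-inequality-plus-Lipschitz estimate; one just has to be careful that $L_{\tau_g}$ and $L_{T_g}$ denote Lipschitz constants in the transformation parameter (for fixed input, resp.\ fixed representation), not in the data argument.
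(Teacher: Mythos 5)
Your proposal is correct and follows essentially the same route as the paper's proof: anchor at the nearest seen $g_i$ where $f(\tau_{g_i}(\vx_0)) = T_{g_i}(f(\vx_0))$ holds at the optimum, then apply the triangle inequality and bound the two terms by $L_f L_{\tau_g}|g-g_i|$ and $L_{T_g}|g-g_i|$ respectively. Your Step 1 actually spells out more carefully than the paper (which simply asserts it ``by construction'') why exact equivariance holds at the seen parameters --- via the vanishing $\JSD$, the injectivity of softmax on the affine hyperplane fixed by the BN bias, and the reading of $\vz$ as the column reduction --- which is a welcome clarification rather than a different argument.
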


\begin{proof}
Using triangular inequality, we get
\begin{equation}
    \|f(\tau_g(\vx_0)) - T_g(f(\vx_0)) \| \leq \|f(\tau_g(\vx_0)) - f(\tau_{g_i}(\vx_0))\| + \|f(\tau_{g_i}(\vx_0)) - T_g(f(\vx_0))\|
\end{equation}
for any given augmentation $\vx_i=\tau_{g_i}(\vx_0)$ seen during training time. At the optimum, we have by construction that $f(\tau_{g_i}(\vx_0)) = T_{g_i}(f(\vx_0))$, which allows us to rewrite the second term as 
\begin{equation}
    \|f(\tau_{g_i}(\vx_0)) - T_g(f(\vx_0)) \| = \|T_{g_i}(f(\vx_0)) - T_g(f(\vx_0)) \| \leq L_{T_g} |g-g_i|
\end{equation}    
Notice $L_{T_g}$ depends on the target discretization chosen: for the Gaussian target and $\infty$-norm, we recover it analytically in Lemma~\ref{thm:LipschitzTg}.
The first term, instead, becomes
\begin{equation}
\|f(\tau_g(\vx_0)) - f(\tau_{g_i}(\vx_0))\| \leq L_f\|\tau_g(\vx_0) - \tau_{g_i}(\vx_0)\| \leq L_f L_{\tau_g} |g-g_i|.
\end{equation}
Combining these results together, we recover the target bound.
\end{proof}
Notice that for a \emph{discrete} group, instead, it is possible to train $f(\vx)$ so that it achieves exact equivariance:
\begin{corollary}
Given a discrete group $\mathcal{G}$, a neural network $f(\vx)$ trained with loss \Cref{eq:loss_g} achieves equivariance at the optimum, if it is exposed to all group transformations.
\end{corollary}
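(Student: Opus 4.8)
The plan is to obtain the corollary as an immediate consequence of Theorem~\ref{thm:equivariancebound}, exploiting the fact that a discrete group has only finitely many admissible parameter values, so that "exposed to all group transformations'' leaves no unseen $g$ for equivariance to fail on.

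First I would unpack the hypothesis: for the training point $\vx_0$, being exposed to all group transformations means that every augmentation $\tau_g(\vx_0)$ with $g \in \sG$ occurs among the ones seen during training, i.e.\ the index set $\{g_i\}$ in Theorem~\ref{thm:equivariancebound} contains all of $\sG$. Hence for an arbitrary $g \in \sG$ we may choose $g_i = g$, which gives $\min_i |g - g_i| = 0$. Substituting this into the bound of Theorem~\ref{thm:equivariancebound} yields $\|f(\tau_g(\vx_0)) - T_g(f(\vx_0))\| \le (L_f L_{\tau_g} + L_{T_g})\cdot 0 = 0$, so $f(\tau_g(\vx_0)) = T_g(f(\vx_0))$ for every $g \in \sG$. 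This is precisely \Cref{eq:equivariance} at the level of the (column-reduced) representation, which is the notion of equivariance used throughout; since the argument is uniform in the base point, it applies to all data points encountered.

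Equivalently — and this is what the bound is really encoding — at the optimum $L_\sG = 0$ the construction of $T_g$ (via \Cref{eq:t_g} and the matching of column sums with the BN biases) already forces $f(\tau_{g_i}(\vx_0)) = T_{g_i}(f(\vx_0))$ for every \emph{seen} $g_i$; when $\sG$ is discrete and fully covered, "seen'' coincides with "all of $\sG$'', so there is simply no transformation parameter left on which equivariance could be violated. I would remark that in the continuous case this step fails — a generic $g$ misses every $g_i$ and one is stuck with the nonzero Lipschitz bound — and that discreteness is exactly what closes the gap.

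I do not expect a genuine obstacle. The only point needing care is bookkeeping about what "achieves equivariance'' means here: as stressed in the appendix, $\vz$ denotes the column reduction $\{\mu_j\}$, the part dedicated to equivariance, and $L_\sG = 0$ is an idealization, so the claim is "exact equivariance at the optimum'' in the same sense as Theorem~\ref{thm:equivariancebound}, not an assertion about an imperfectly trained network. I would state that scope in one sentence and otherwise let Theorem~\ref{thm:equivariancebound} do the work.
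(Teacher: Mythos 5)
Your argument is correct and is exactly the paper's proof: both invoke Theorem~\ref{thm:equivariancebound} and observe that exposure to all elements of a discrete group forces $\min_i|g-g_i|=0$, collapsing the bound to zero. Your added remarks on the column-reduced representation and the idealized optimum are consistent with the paper's framing but not needed beyond that one-line reduction.
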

\begin{proof}
    The proof follows directly from Theorem~\ref{thm:equivariancebound} by noticing that $g-g_i=0$ necessarily, if all group transformations have been seen during training time.
\end{proof}

\begin{theorem}
    \label{thm:LipschitzTg}
    For a Gaussian target, $\hat{Q}_i(g)=\frac{\int_{\Omega_i}\mathcal{N}(g,\sigma)(\tilde{g})\,d\tilde{g}}{\int_{[0,1]}\mathcal{N}(g,\sigma)(\tilde{g})\,d\tilde{g}}$, the Lipschitz continuity constant for $T_g$ in $\infty$-norm is given by $\hat{\mu}_{\dg-1}'(0)$, with $\hat{\mu}_j$ defined in \Cref{eq:muhat}.
\end{theorem}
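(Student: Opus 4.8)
The strategy is to reduce the Lipschitz constant to a one-dimensional problem in the transformation parameter $g$, and then locate its maximizer. First, for a fixed feature matrix $\vz$ the map $g\mapsto T_g(\vz)$ of \Cref{eq:t_g} modifies $\vz$ only through $\widehat{\mM}_g$, so $T_{g_1}(\vz)-T_{g_2}(\vz)=\widehat{\mM}_{g_1}-\widehat{\mM}_{g_2}$, whose $j$-th column is the constant vector with entry $\hat{\mu}_j(g_1)-\hat{\mu}_j(g_2)$; hence $\|T_{g_1}(\vz)-T_{g_2}(\vz)\|_\infty=\max_j|\hat{\mu}_j(g_1)-\hat{\mu}_j(g_2)|$. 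Writing $N_j(g):=\int_{\Omega_j}\Normal(g,\sigma)(t)\,dt$, the discretized target is $\hat{Q}_j(g)=N_j(g)/\sum_k N_k(g)$, so the common denominator cancels in \Cref{eq:muhat} and $\hat{\mu}_j(g)=\beta_j+\ln N_j(g)-\tfrac1\dg\sum_k\ln N_k(g)$; in particular each $\hat{\mu}_j$ is smooth on $[0,1]$ since $N_j(g)>0$. Therefore the (minimal) Lipschitz constant in $\infty$-norm is $L_{T_g}=\sup_{g\in[0,1]}\max_j|\hat{\mu}_j'(g)|$.

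Differentiating, and using $\partial_g\Normal(g,\sigma)(t)=\tfrac{t-g}{\sigma^2}\Normal(g,\sigma)(t)$, yields the clean identity
\begin{equation}
    \hat{\mu}_j'(g)=\frac{1}{\sigma^2}\Big(\E[T\mid T\in\Omega_j]-\frac1\dg\sum_k\E[T\mid T\in\Omega_k]\Big),\qquad T\sim\Normal(g,\sigma).
\end{equation}
Since the $\Omega_j$ are consecutive intervals, $j\mapsto\E[T\mid T\in\Omega_j]$ is strictly increasing, so for every $g$ the value $\hat{\mu}_j'(g)$ is largest at $j=\dg-1$, smallest at $j=0$, with $\hat{\mu}_{\dg-1}'(g)\geq0\geq\hat{\mu}_0'(g)$. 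The reflection $t\mapsto1-t$, $g\mapsto1-g$ sends $\Omega_j$ to $\Omega_{\dg-1-j}$, hence $N_j(g)=N_{\dg-1-j}(1-g)$ and $\hat{\mu}_j'(g)=-\hat{\mu}_{\dg-1-j}'(1-g)$. Combining these collapses the maximum over $j$ and the reflection into $L_{T_g}=\sup_{g\in[0,1]}\hat{\mu}_{\dg-1}'(g)$, so the theorem reduces to showing this supremum is attained at $g=0$.

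For that last step I would differentiate once more: $\partial_g\E[T\mid T\in\Omega_j]=\tfrac1{\sigma^2}\Var(T\mid T\in\Omega_j)$ gives $\hat{\mu}_{\dg-1}''(g)=\tfrac1{\sigma^4}\big(\Var(T\mid T\in\Omega_{\dg-1})-\tfrac1\dg\sum_k\Var(T\mid T\in\Omega_k)\big)$. The analytic core is the monotonicity statement that, for $T\sim\Normal(\mu,\sigma)$, the conditional variance over a fixed-width window decreases as the window moves away from $\mu$ into a tail. Granting it: at $g=0$ the window $\Omega_{\dg-1}$ lies farthest out, so $\Var(T\mid T\in\Omega_{\dg-1})$ is the smallest of the $\dg$ conditional variances and $\hat{\mu}_{\dg-1}''(0)\leq0$; as $g$ increases the peak of $\Normal(g,\sigma)$ drifts toward $\Omega_{\dg-1}$ and $\hat{\mu}_{\dg-1}''$ becomes nonnegative, so $\hat{\mu}_{\dg-1}'$ is first decreasing then increasing on $[0,1]$ and its maximum is at an endpoint. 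It remains to compare $\hat{\mu}_{\dg-1}'(0)$ with $\hat{\mu}_{\dg-1}'(1)=-\hat{\mu}_0'(0)$; by the displayed formula this amounts to $\E[T\mid T\in\Omega_0]+\E[T\mid T\in\Omega_{\dg-1}]\geq\tfrac2\dg\sum_k\E[T\mid T\in\Omega_k]$ at $g=0$, i.e. the mean of the two extreme conditional means dominates the overall average, which follows from the convexity of $k\mapsto\E[T\mid T\in\Omega_k]$ at $g=0$ — its deviation from the (affine-in-$k$) interval center $c_k$ is a truncated-Gaussian mean whose $c_k$-derivative equals minus a conditional variance, which is increasing in $c_k$ by the same lemma.

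The main obstacle is precisely the variance-monotonicity lemma invoked twice above, that $\Var(T\mid T\in[a,a+w])$ with $T\sim\Normal(0,\sigma)$ is nonincreasing in $a$ for $a\geq0$ (together with the attendant claim that it forces $\hat{\mu}_{\dg-1}''$ to change sign only once on $[0,1]$). Everything else is bookkeeping with the explicit form of $T_g$ and the elementary identity $\partial_g\E[\cdot]=\sigma^{-2}\Var(\cdot)$; but this monotonicity — intuitively a consequence of log-concavity of the Gaussian (sliding the window into a more steeply decreasing region concentrates the conditional mass) — needs a genuine argument, e.g. differentiating the closed-form truncated-normal variance or showing that the exponentially tilted density on $[0,w]$ has nonnegative third central moment. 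If one is content with \emph{a} valid Lipschitz constant rather than the optimal one, the endpoint comparison and this lemma can be bypassed, since $\hat{\mu}_{\dg-1}'(0)$ is in any case attained among the values $|\hat{\mu}_j'(g)|$.
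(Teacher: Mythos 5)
Your proposal is correct in outline and follows the same skeleton as the paper's proof — reduce the $\infty$-norm Lipschitz constant of $T_g$ to $\sup_{g}\max_j|\hat{\mu}_j'(g)|$, show the maximum over $j$ sits at the extreme index, use the reflection symmetry $g\mapsto 1-g$, $j\mapsto \dg-1-j$ to collapse to $j=\dg-1$, and finally argue the supremum over $g$ is attained at $g=0$ — but you phrase every quantity in the probabilistic dictionary of the truncated Gaussian, where the paper works with $h_j(g)=(\ln\Delta\Phi_j^{j+1})'(g)$ and its qualitative properties. Your identities $\hat{\mu}_j'(g)=\sigma^{-2}(\E[T\mid T\in\Omega_j]-\frac{1}{\dg}\sum_k\E[T\mid T\in\Omega_k])$ and $\partial_g\E[T\mid T\in\Omega_j]=\sigma^{-2}\Var(T\mid T\in\Omega_j)$ are exactly the paper's $h_j(g)=\sigma^{-2}(\E[T\mid T\in\Omega_j]-g)$ in disguise, but they buy you two things: the step ``$\max_j|\hat{\mu}_j'|$ is attained at $j=0$ or $j=\dg-1$'' becomes an immediate consequence of the ordering of conditional means over ordered intervals (the paper's sign-counting argument for the same step is looser), and the endpoint-versus-interior question becomes a clean statement about the sign of $\hat{\mu}_{\dg-1}''$, i.e.\ about conditional variances. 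The gap you flag honestly — monotonicity of $\Var(T\mid T\in[a,a+w])$ in $|a|$, equivalently the sign of the third central moment of the window-truncated Gaussian — is precisely the gap in the paper's own proof: its claims that $h_j$ is decreasing and convex for $g<g_j^*$ are equivalent to ``truncation to a window reduces variance below $\sigma^2$'' and to your skewness-sign claim, and the paper defers or omits their proofs as well. Likewise, your final endpoint comparison $\E[T\mid T\in\Omega_0]+\E[T\mid T\in\Omega_{\dg-1}]\geq\frac{2}{\dg}\sum_k\E[T\mid T\in\Omega_k]$ at $g=0$ is the same inequality the paper derives from the antisymmetry and half-interval convexity of $h_{\dg-1}$. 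In short: same route, better bookkeeping, identical unproven analytic kernel; to fully close either version one must actually establish the variance-monotonicity (log-concavity) lemma, e.g.\ by differentiating the closed-form truncated-normal variance, and additionally justify that $\hat{\mu}_{\dg-1}''$ changes sign only once on $[0,1]$ so that the maximum of $\hat{\mu}_{\dg-1}'$ really is at an endpoint — a point the paper's appeal to ``concavity of $h_j$'' also leaves implicit.
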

\begin{proof}
Starting from the definition of $T_g(\vz)$ in \Cref{eq:t_g}, and using the mean-value theorem, we get
\begin{equation}
\begin{split}
\| T_g(\vz) - T_{\hat{g}}(\vz)\|_{\infty} &= \| \hat{\mM}_g - \hat{\mM}_{\hat{g}}\|_{\infty}  =  \max_{j} |\hat{\mu}_j(g) - \hat{\mu}_j(\hat{g})| = \max_{j} |\hat{\mu}_j'(\tilde{g}_j)||g-\hat{g}|\\
\end{split}
\end{equation}
for some (possibly different for different $j$) $\tilde{g}_j\in[g,\hat{g}]$. We remind that $\hat{\mu}_j(g)$ is defined in \eqref{eq:muhat} as
\begin{equation}
\begin{split}
    \hat{\mu}_j(g) &= \ln{\hat{Q}_j(g)}-\frac{1}{\dg}\sum_i\ln{\hat{Q}_i(g)} = \ln\left(\frac{\Delta\Phi_j^{j+1}(g)}{\Delta\Phi_0^{\dg}(g)}\right)-\frac{1}{\dg}\sum_i\ln\left(\frac{\Delta\Phi_i^{i+1}(g)}{\Delta\Phi_0^{\dg}(g)}\right)\\
     &= \ln\Delta\Phi_j^{j+1}(g)-\frac{1}{\dg}\sum_i\ln\Delta\Phi_i^{i+1}(g),
    \quad\text{with}\quad \Delta\Phi_{i}^j=\int_{g_i}^{g_{j}}\mathcal{N}(g,\sigma)(x)\,dx,\quad\text{and}\quad g_i=\frac{i}{\dg},
\end{split}
\end{equation}
so that its derivative can be compactly written as
\begin{equation}
    \hat{\mu}'_j(g) = h_j(g) - \frac{1}{\dg}\sum_i h_i(g),\quad\text{where}\quad h_i(g)=\frac{(\Delta\Phi_{i}^{i+1})'(g)}{\Delta\Phi_{i}^{i+1}(g)}.
    \label{eq:mujprime}
\end{equation}
Our goal is to bound $\hat{\mu}'_j(g)$, which can be quantified starting from considerations on the various $h_j(g)$. It can be proven that these are:
\begin{itemize}
\item equivalent modulo translations: $h_j(g)=h_{j-i}(g-i/\dg)$;
\item antisymmetric with respect to $g$ around the centerpoint $g_j^*=(g_{j+1}+g_j)/2$: $h_j(g_j^*+g)=-h_j(g_j^*-g)$;
\item antisymmetric with respect to $j$: $h_j(g_j^*+g)=-h_{\dg-j}(g_{\dg-j}^*-g)$;
\item decreasing: $h_j'(g)\leq0$;\todo{this I managed to prove formally, but it is a very long proof, and I don't think it makes sense to include it here}
\item convex for $g<g_j^*$: $g\lessgtr g_j^*\Longrightarrow h_j''(g)\gtrless 0$.\todo{This I checked by plotting the functions for various sigmas, and considering the behaviour at various limit expansions, but haven't formally proven it holds for the whole interval! - still, even as it is the proof is quite lengthy to be included here...}
\end{itemize}
We can gain a better intuition about how to effectively bound $\hat{\mu}'(g)$ by rewriting \eqref{eq:mujprime} using the equivalence under translations of $h_j(g)$:
\begin{equation}
    \hat{\mu}'_j(g) = \frac{1}{\dg}\sum_i\left(h_j(g)-h_j\left(g+\frac{j-i}{\dg}\right)\right)
\end{equation}
this shows that for each $j$ we are averaging the differences between $h_j(g)$ and the same function evaluated at $\dg$ equispaced points $g-(i-j)/\dg$. Since $h_j(g)$ is decreasing, we deduce that this difference is positive whenever $i<j$, and negative otherwise. We have then that the maximum absolute value of $\hat{\mu}_j'(g)$ is always attained for the most extreme $j$, since that guarantees that the largest number of terms share the same sign\todo{also, all terms referring to a $j$ are bound by the corresponding ones referring to $j+1$ due to concavity}. Without loss of generality (by symmetry), we can consider $j=\dg-1$, and we have 
\begin{equation}
    \max_j|\hat{\mu}'_j(g)| = \hat{\mu}'_{\dg-1}(g)\qquad\forall g\in[0,1].
\end{equation}
It suffices now to bound this quantity in $[0,1]$. Due to the concavity of $h_j(g)$, its maxima will be at the boundary, and specifically at $g=0$. This can be shown by simply comparing the values at $0$ and at $1$ (we drop the subscript $\dg-1$ and consider $h_{\dg-1}(g)=h(g)$ from now on):
\begin{equation}
\begin{split}
    \hat{\mu}'_{\dg-1}(0)-\hat{\mu}'_{\dg-1}(1) &=
    \frac{1}{\dg}\sum_{i=0}^{\dg-1}\left(h(0)-h\left(\frac{\dg-1-i}{\dg}\right)\right)
      - \frac{1}{\dg}\sum_{i=0}^{\dg-1}\left(h(1)-h\left(1+\frac{\dg-1-i}{\dg}\right)\right)\\
    & = \frac{1}{\dg}\sum_{i=0}^{\dg-1}\left(h(0)+h\left(\frac{\dg-1}{\dg}\right)-2h\left(\frac{i}{\dg}\right)\right)\\
    & = \frac{1}{\dg}\sum_{i=0}^{\dg-1}\left(h(0)+h\left(\frac{\dg-1}{\dg}\right)-\left(h\left(\frac{i}{\dg}\right)+h\left(\frac{\dg-1-i}{\dg}\right)\right)\right) \geq 0
\end{split}
\end{equation}
where we exploited the antisymmetry of $h_{\dg-1}(g)$ around $g_{\dg-1}^*=1-1/(2\dg)$ to aptly change the inner arguments, as well as the convexity of $h(g)$ for $g<g_{\dg-1}^*$ to state that $h(0)+h\left(\frac{\dg-1}{\dg}\right)\geq\left(h\left(\frac{i}{\dg}\right)+h\left(\frac{\dg-1-i}{\dg}\right)\right)$, for each $i$. This allows us to explicitly write the Lipschitz constant for the transformation $T_g(\vz)$ as
\begin{equation}
    \| T_g(\vz) - T_{\hat{g}}(\vz)\|_{\infty} \leq \hat{\mu}_{\dg-1}'(0)|g-\hat{g}|.
\end{equation}
\end{proof}

\end{section}

\begin{section}{Proof of Axioms for $T_g$ in \Cref{eq:t_g}}
\label{app:proof-axioms}
    Notice that $T_g$, thus defined, satisfies the group axioms at proper training ($L_\sG$ is minimized). In fact:
\begin{itemize}
    \item Neutral: $g=0 \;\;s.t.\;\; T_{0}(\vz) = \vz$. Easily proven since $ \mM_{g_0} = \widehat{\mM}_{g_0+0}$.
    \item Inverse: $g^{-1}=-g \;\;s.t.\;\; T_{g^{-1}} \circ T_{g} (\vz) = \vz$. Let $\vz^\prime = T_g(\vz)$, then $T_{g^{-1}}(\vz^\prime) = \vz^\prime - \mM_{g_0^\prime} + \widehat{\mM}_{g_0^\prime-g}$. Since $g_0^\prime = g_0 + g$, then $T_{g^{-1}} \circ T_{g} (\vz) = \vz - \mM_{g_0} + \widehat{\mM}_{g_0+g} - \mM_{g_0 + g} + \widehat{\mM}_{g_0 + g - g} = \vz$.
    \item Associativity: Similar reasoning as for the inverse property with 2 different elements.
    \item Closure: We work in $\R^D$ at representation level, so closure is verified.
\end{itemize}
\end{section}

\clearpage
\section{\method for Multiple Groups}
\label{app:multi_group}

Modern MSSL frameworks use complex augmentation stacks that compose several transformations. While learning structure with respect to a single group is interesting, one could benefit from learning such structure for a set of groups. For readability, we focus on the two group case ($\gG_A$ and $\gG_B$); but the following reasoning can easily be extended to more groups.

In order to model the interdependencies between groups and content, one can learn the joint distribution $P(\rc, \rg_A, \rg_B | \vx_k)$, where $\rg_A \in \sG_A$ and $\rg_B \in \sG_B$ are 2 random variables representing the respective group elements. Such approach implies that our backbone $f$ maps to $\R^{\dc\times |\sG_A| \times |\sG_B|}$. The marginal distributions are now obtained by summing over the non-desired dimensions (\eg over $\sC$ and $\sG_A$ to obtain $P(\rg_B|\vx_k)$). Using these new marginals, we define the multi-group loss as 
\begin{equation}
    \Loss_{\text{Multi-}\sG} = \frac{1}{2}\sum_{l=\{A, B\}} \Loss_{\sG_A} + \Loss_{\sG_B}.
\end{equation}
However, as the number of groups increases, modelling the joint distribution becomes intractable. In practice, keeping $\dc$ constant, the dimensionality of $\vz$ increases in $O(\dg^n)$ with the number of groups. 

To address scalability, we propose to relax the formulation and let our backbone $f$ map into $\R^{\dc\times (|\sG_A| + |\sG_B|)}$, so that the dimensionality of $\vz$ increases in $O(n\dg)$ with the number of groups. Using this relaxation we actually consider $\sG_A$ and $\sG_B$ independent, although their structure is learnt jointly during training. In practice, $\vz$ is divided into two blocks, with $|\sG_A|$ and $|\sG_B|$ columns each. In this scenario, $P(\rg_A|\vx_k)$ is obtained by summing over columns of the $\sG_A$ block, and $P(\rg_B|\vx_k)$ by summing over the columns of the $\sG_B$ block. The content marginal $P(\rc |\vx_k)$ is obtained by concatenating the sum over the rows of each group block.

\FloatBarrier
\section{Recovering the Transformation Parameter for a Von-Mises Distribution}
\label{app:recovervm}
Let $\vx_i$ be samples of a $\VM(\vx | \mu, \kappa)$ with unknown parameters $\mu$ and $\kappa$. We want to recover the parameter $\mu$, which corresponds to the group element that yields such $\VM$ prior. Let $\vr = \sum_i \vx_i$ be the baricenter of the samples with respect to the origin, then $\tilde{g} = \tilde{\mu} = \text{angle}(\vr)$.

\section{Empirical Equivariance: Additional Plots}
\label{app:empiricalequivariance}

\begin{figure*}[hbt!]
\centering
\begin{subfigure}{0.24\textwidth}
    \includegraphics[width=\textwidth]{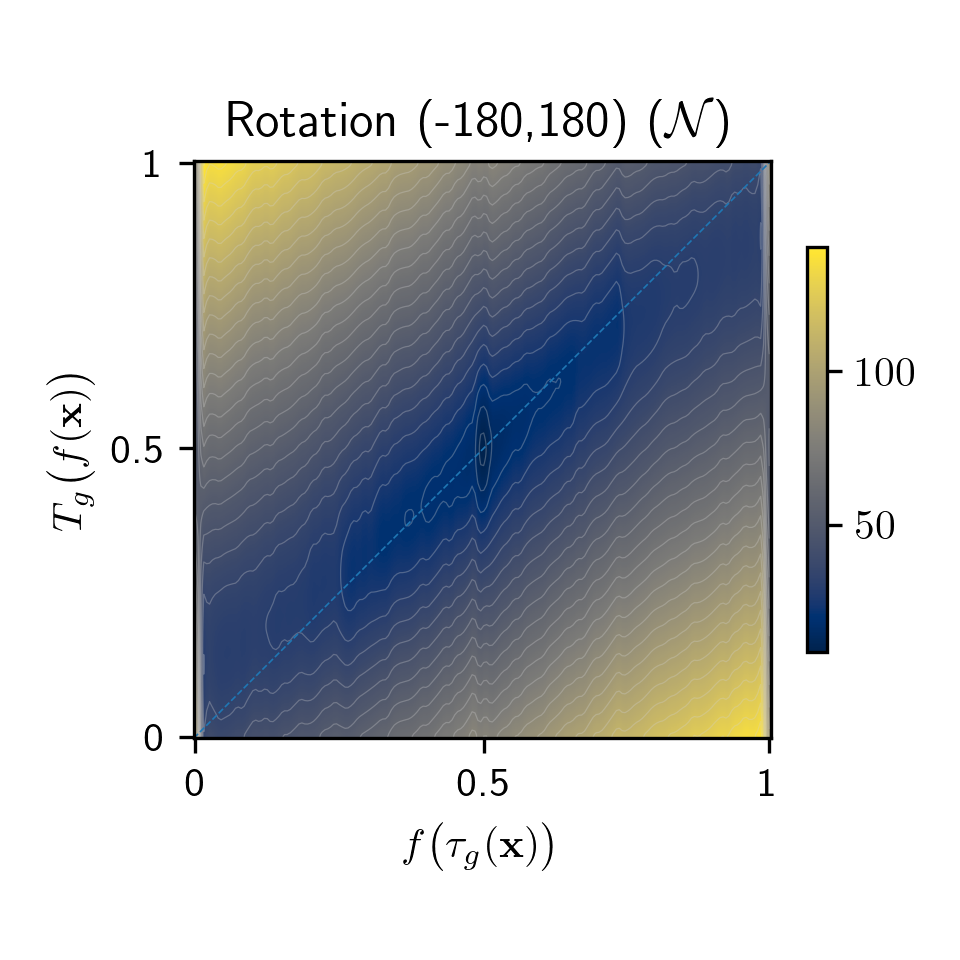}
    \label{fig:first}
\end{subfigure}
\hfill
\begin{subfigure}{0.24\textwidth}
    \includegraphics[width=\textwidth]{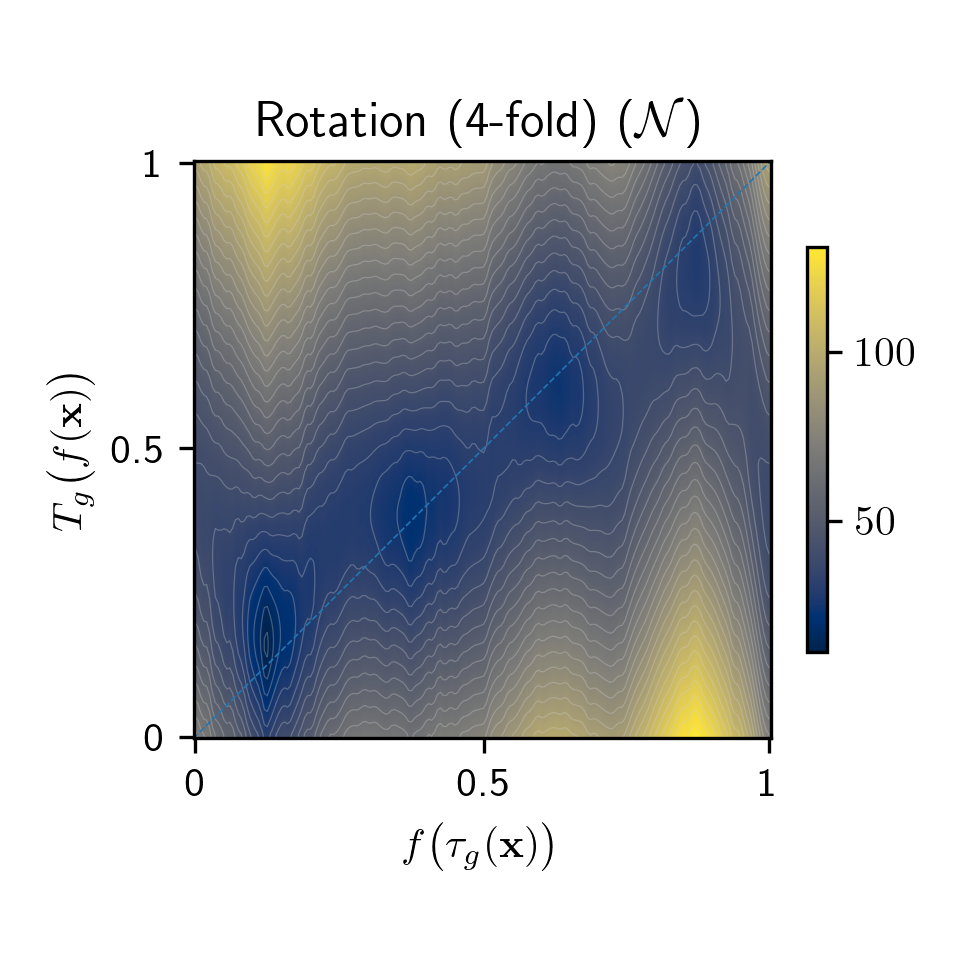}
    \label{fig:third}
\end{subfigure}
\hfill
\begin{subfigure}{0.24\textwidth}
    \includegraphics[width=\textwidth]{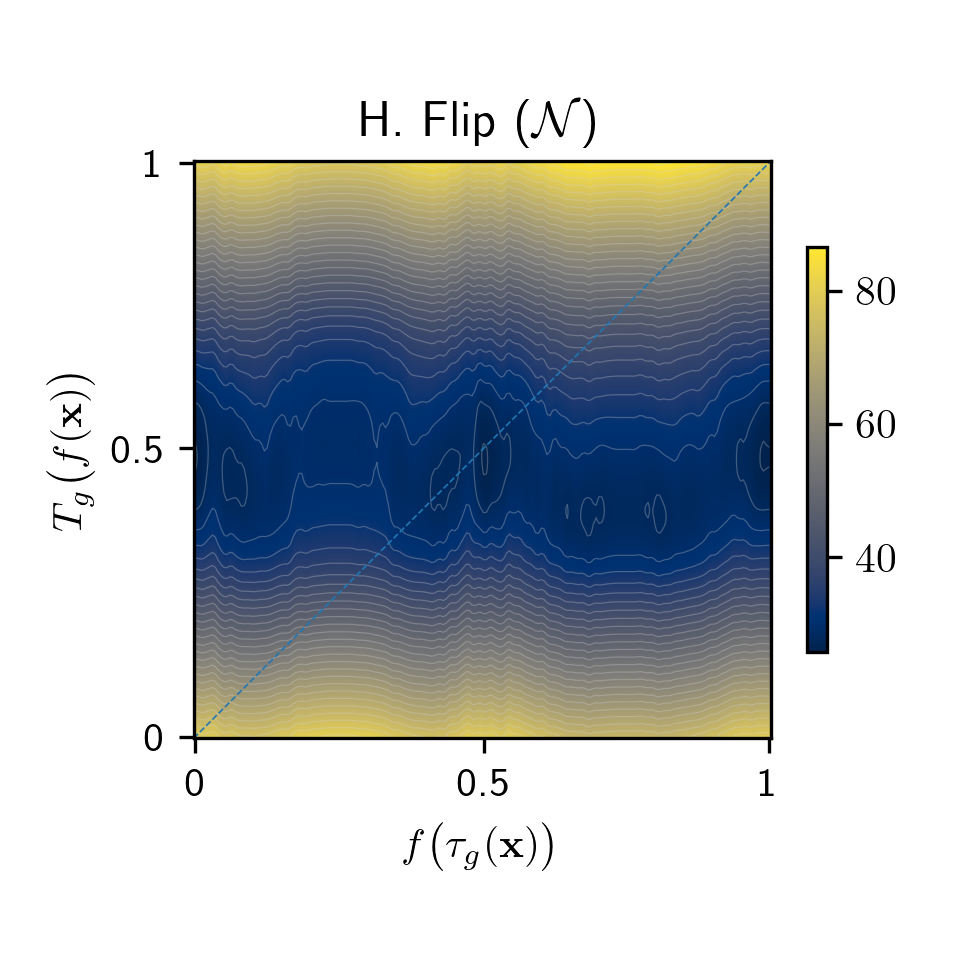}
    \label{fig:third}
\end{subfigure}
\hfill
\begin{subfigure}{0.24\textwidth}
    \includegraphics[width=\textwidth]{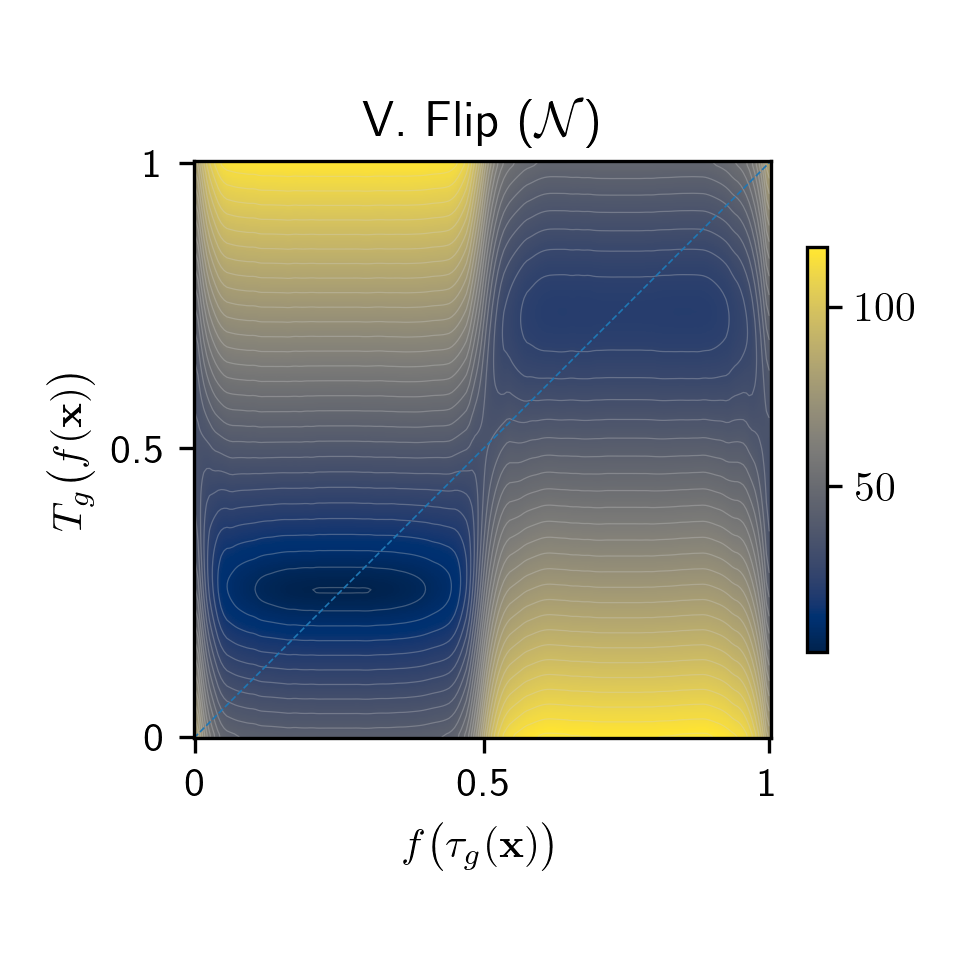}
    \label{fig:third}
\end{subfigure}
\vspace{-8mm}

\caption{Empirical validation of equivariance for cyclic groups with a non-cyclic Gaussian prior. Note the difference with the top row of \Figref{fig:empiricaleq}. In the Gaussian case, the cyclic nature of rotation and flip is not observed, and equivariance is less well satisfied.}
\label{fig:empirical_eq_gaussian}
\centering
\end{figure*}

\begin{figure}[htb!]
\caption{Examples of the transformations $\tau_g$ applied on the input images $\vx$ to obtain the plots in \Cref{fig:empiricaleq} and \Cref{fig:empirical_eq_gaussian}. For flips, we simulate a gradual flip by alpha-blending the 2 flipped images.}
\centering
\label{fig:image_tx}
\vskip 1mm
\includegraphics[width=0.7\columnwidth]{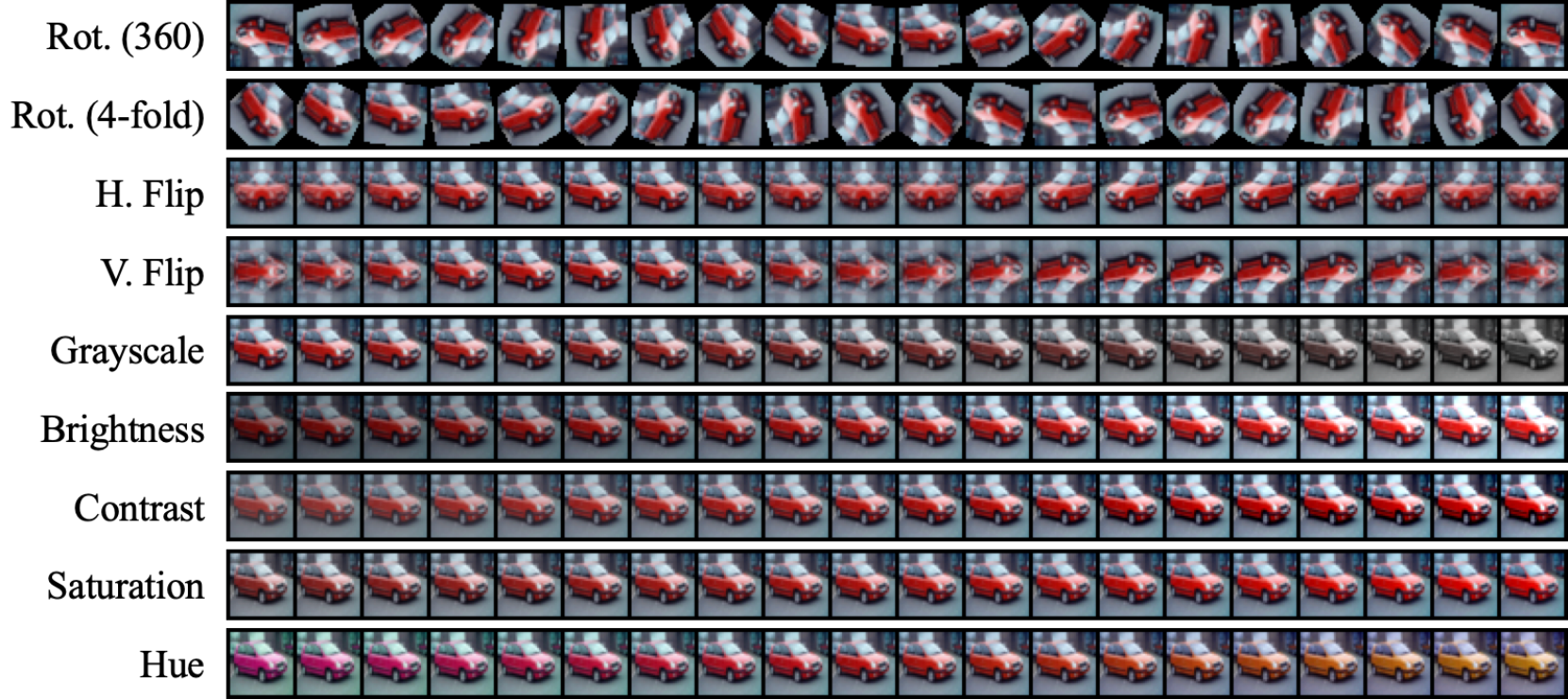}
\end{figure}

\FloatBarrier
\section{Reconstruction Error}
\label{app:rec_error}

In order to verify our hypothesis that structured representations are beneficial for generation, we measure the reconstruction error obtained with the decoders used in \Cref{sec:generation}. We use a mean squared error loss for reconstruction: $L_{rec}^{(i)} = \| d(f(\tau_g(\vx^{(i)}))) - \tau_g(\vx^{(i)}) \|_2^2$, where $d(\cdot)$ is a decoder network. In \Cref{fig:rec_loss} we plot the final test $L_{rec}$ on CIFAR-10 for decoders trained on frozen \method, ESSL and SimCLR representations, for some of the transformations analyzed.
The obtained reconstruction error with \method is up to 66\% smaller than with SimCLR (rotation (4-fold)) and up to 70\% smaller than with ESSL (grayscale).

\begin{figure}[htb!]
\caption{Reconstruction error (smaller is better) obtained with decoders trained on frozen \method, ESSL and SimCLR representations. The horizontal
dashed line shows the baseline error of SimCLR
with only RRC.}
\centering
\label{fig:rec_loss}
\vskip 1mm
\includegraphics[width=\columnwidth]{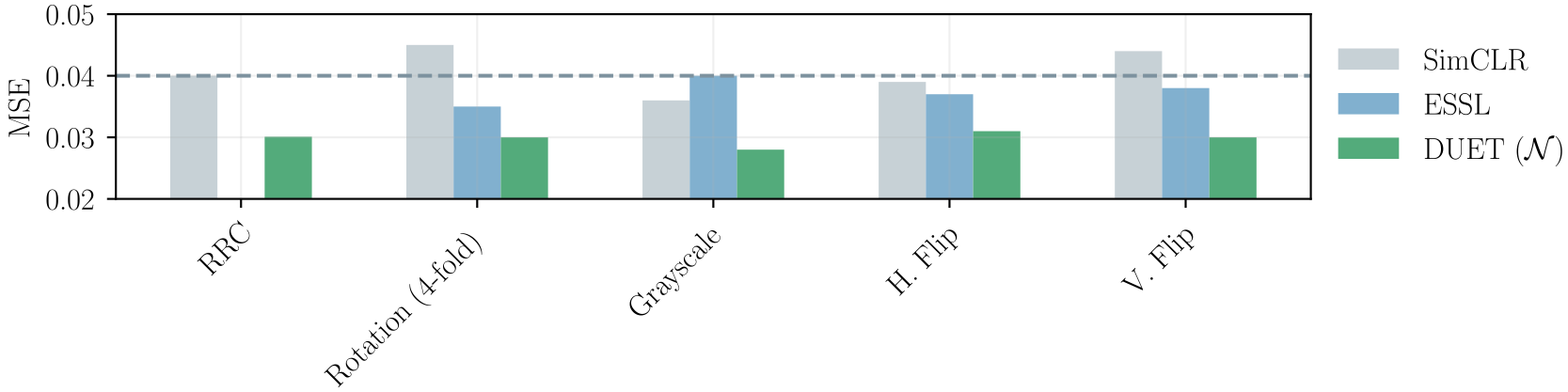}
\end{figure}

\FloatBarrier
\section{Full Stack Augmentations}
\label{app:fullstack-augs}

In Section~\ref{sec:fullstack} we report the performance of \method and other methods using the full SimCLR augmentation stack. More precisely, the augmentations used are:

\begin{itemize}
    \item \texttt{RandomResizedCrop(scale=(0.2, 1.0))}
    \item \texttt{ColorJitter(brightness=0.4, saturation=0.4, contrast=0.4, hue=0.1, p=0.8)}
    \item \texttt{RandomHorizontalFlip(p=0.5)}
    \item \texttt{RandomGrayscale(p=0.2)}
    \item \texttt{RandomGaussianBlur(kernel\_size=(3, 3), sigma=(0.1, 2.0), p=0.5)}
\end{itemize}

When we learn structure for groups that are not directly parameterized in this stack, we add a specific transformation. For example, for the \textit{vertical flip} group we add \texttt{RandomVerticalFlip(p=0.5)}. Or for rotations, we add a random rotation transformation in the stack.

\FloatBarrier
\section{Training Procedure}
\label{sec:train_procedure}
For all our experiments we use as backbone a ResNet-32 \cite{resnet} architecture with an input kernel of $3\times3$ and stride of 1. The output dimensionality of the ResNet is $\R^{512}$, which we reshape to $\R^{64\times 8}$ for a group granularity of $\dg=8$. Note that we do not add parameters, we only reshape the output of a vanilla ResNet to build our \method representations. Additional training parameters are shown in \Cref{tab:training_params}.

The detached decoders in \Cref{sec:generation} are also trained using the same procedure. The reconstructed images are RGB with $32 \times 32$ pixels. The decoder architecture is a ResNet-18 with swish activation functions, visual attention and GroupNorm \cite{groupnorm} normalization. 

\begin{table}[htb!]
    \caption{Training parameters.} 
    \label{tab:training_params}
    \centering
    \footnotesize
    \vskip 0.1in
    \begin{tabular}{ll}
    \toprule
    Batch size & 2048 \\
    Epochs & 800 \\
    Input images & RGB of $32 \times 32$ \\
    Learning rate & $0.0001$ \\
    Learning rate warm-up & 10 epochs \\
    Learning rate schedule & Cosine \\
    Optimizer & Adam($\beta=[0.9, 0.95]$) \\
    Weight decay & 0.0001 \\
    \bottomrule
\end{tabular}
\end{table}

\FloatBarrier

\subsection{Effect of $\lambda$, $\sigma$ and $\dg$}
\label{sec:ablation}
We perform a sweeping of $\lambda$ values between 0 and 1000. 
The first observation is that adding structure improves over SimCLR for all transformations (see \Cref{fig:sweep-lambda} in the Appendix).  However, color transformations and horizontal flips degrade performance if we strongly impose structure. This result hints that structure for such groups is harder to learn, or is less learnable from data (\eg the structure is ambiguous, as in the case of having flipped and non-flipped images in the dataset).
Interestingly, \stovez also improves slightly over SimCLR, showing that unsupervised structure is still helpful for the specific case of CIFAR-10. 
Overall, our results show that $\lambda=10$ is optimal for all transformations but \textit{scale}, \textit{rotations} and \textit{vertical flips} which can handle up to $\lambda=1000$.   

In \Cref{fig:sweep-sigma} we show the accuracy of SimCLR and \method at different $\sigma$ for all transformations analyzed. The violin plots show the median accuracy across transformations. We obtain an empirically optimal value of $\sigma=0.2$ for \method. Note that $\sigma=10$ is almost equivalent to a uniform target, thus not imposing any structure. In \Cref{fig:sigma-sweep-groups} we show the detailed results per transformation, observing that horizontal flip behaves better with a uniform target. Indeed, as observed in \Cref{sec:empiricalequivariance} and \Cref{sec:rep-cls}, with the datasets used horizontal flip is ambiguous and we cannot learn this symmetry from data.

Lastly, we found \method is quite insensitive to the choice of parameter $\dg$, based on results on CIFAR-10. We sweep $\dg=2, 4, 8, 16$ and the obtained accuracy changes by less than 1\%. We choose to use a reasonable value of $\dg=8$.

\begin{figure}[hbt!]
\caption{Sweep of \method's parameter $\sigma$. We find empirically that $\sigma\approx0.2$ works best. }
\centering
\label{fig:sweep-sigma}
\includegraphics[width=0.5\columnwidth]{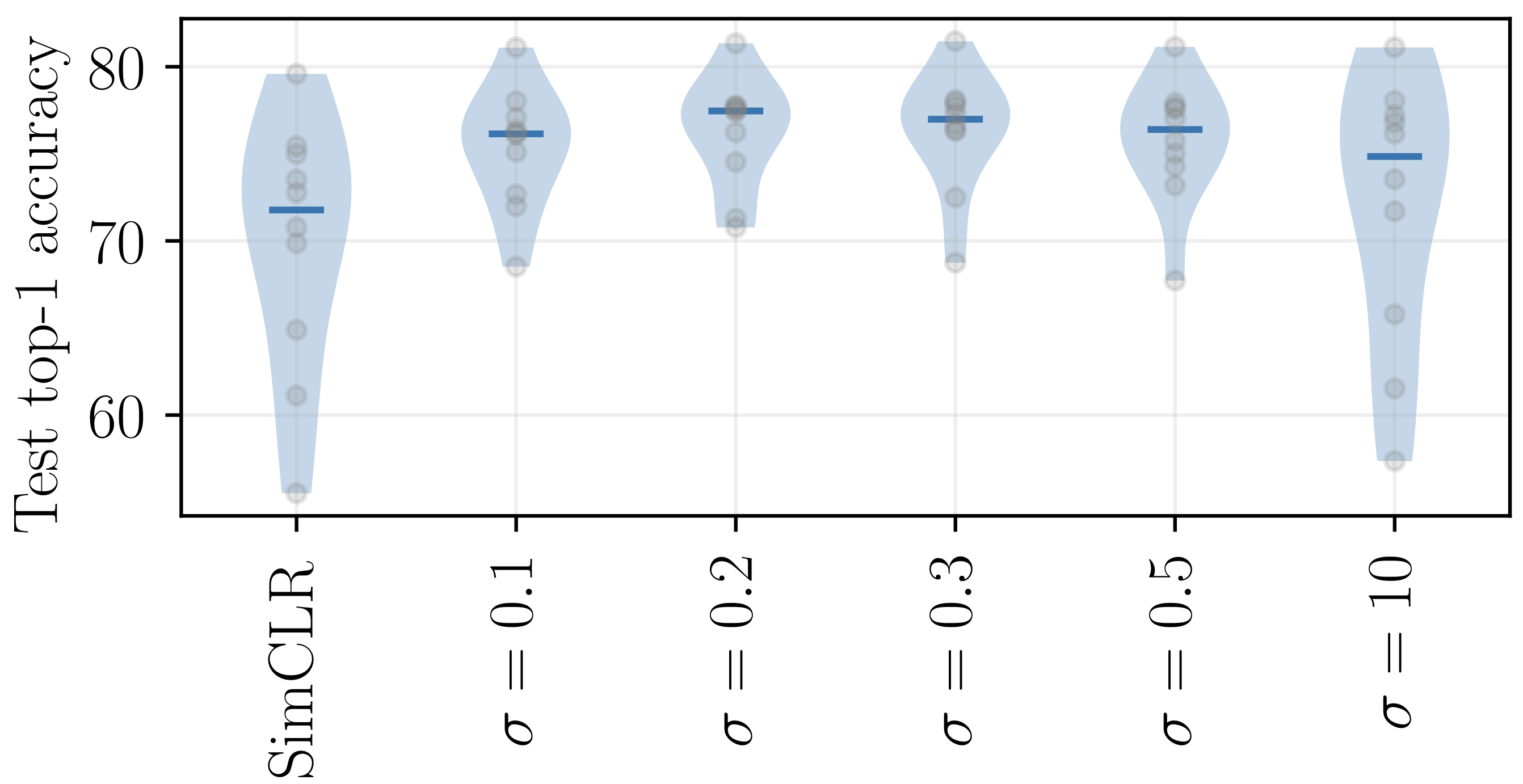}
\vskip -0.2in
\end{figure}

\begin{figure*}[hbt!]
\caption{Sweep of \method's parameter $\lambda$. Note that different transformations require different optimal $\lambda$s. }
\centering
\label{fig:sweep-lambda}
\includegraphics[width=0.5\columnwidth]{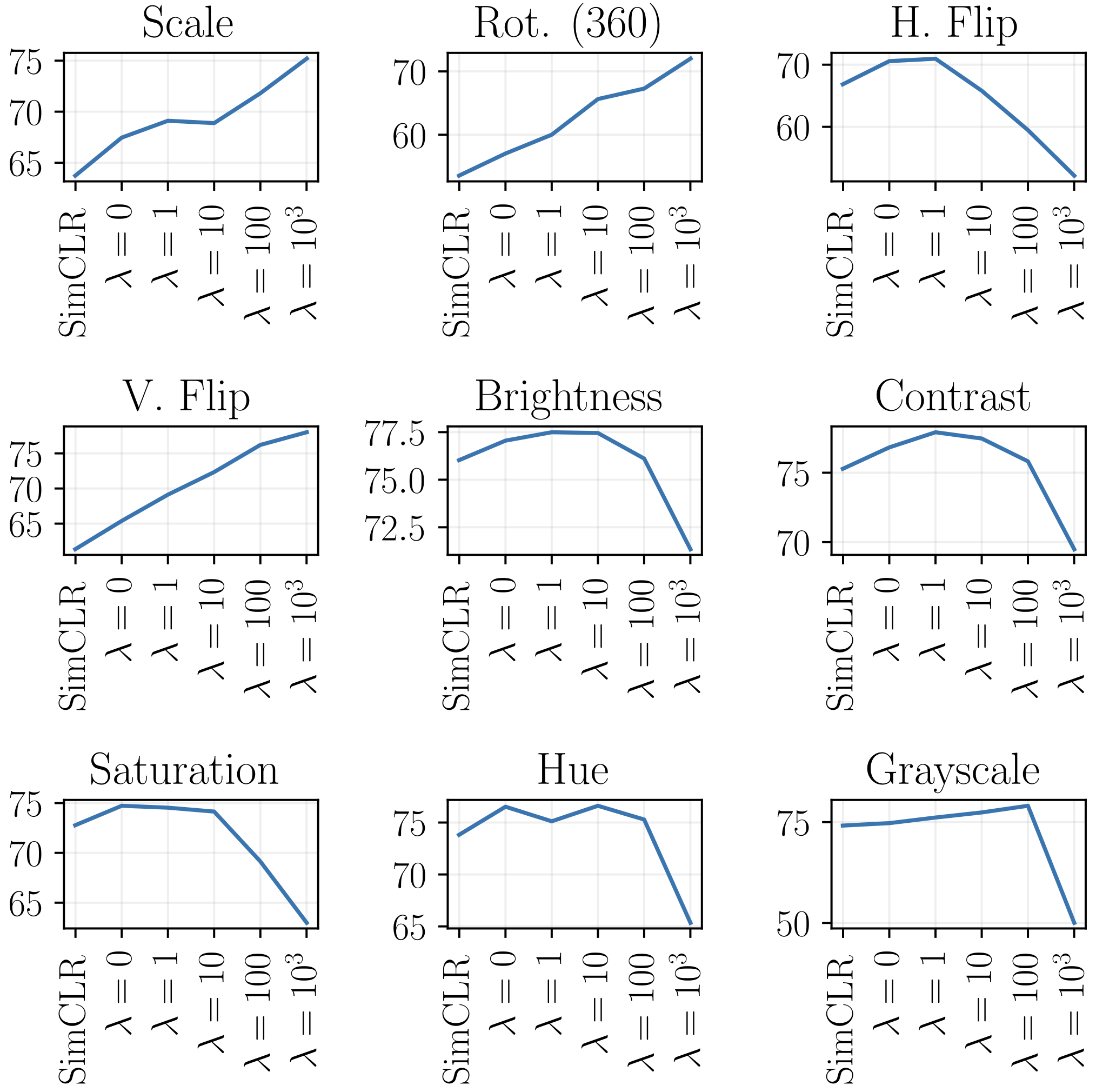}
\end{figure*}

\begin{figure*}[hbt!]
\caption{Test top-1 performance on CIFAR-10 as we modify the $\sigma$ parameter in \method. We report here the results per group.}
\centering
\label{fig:sigma-sweep-groups}
\includegraphics[width=0.75\columnwidth]{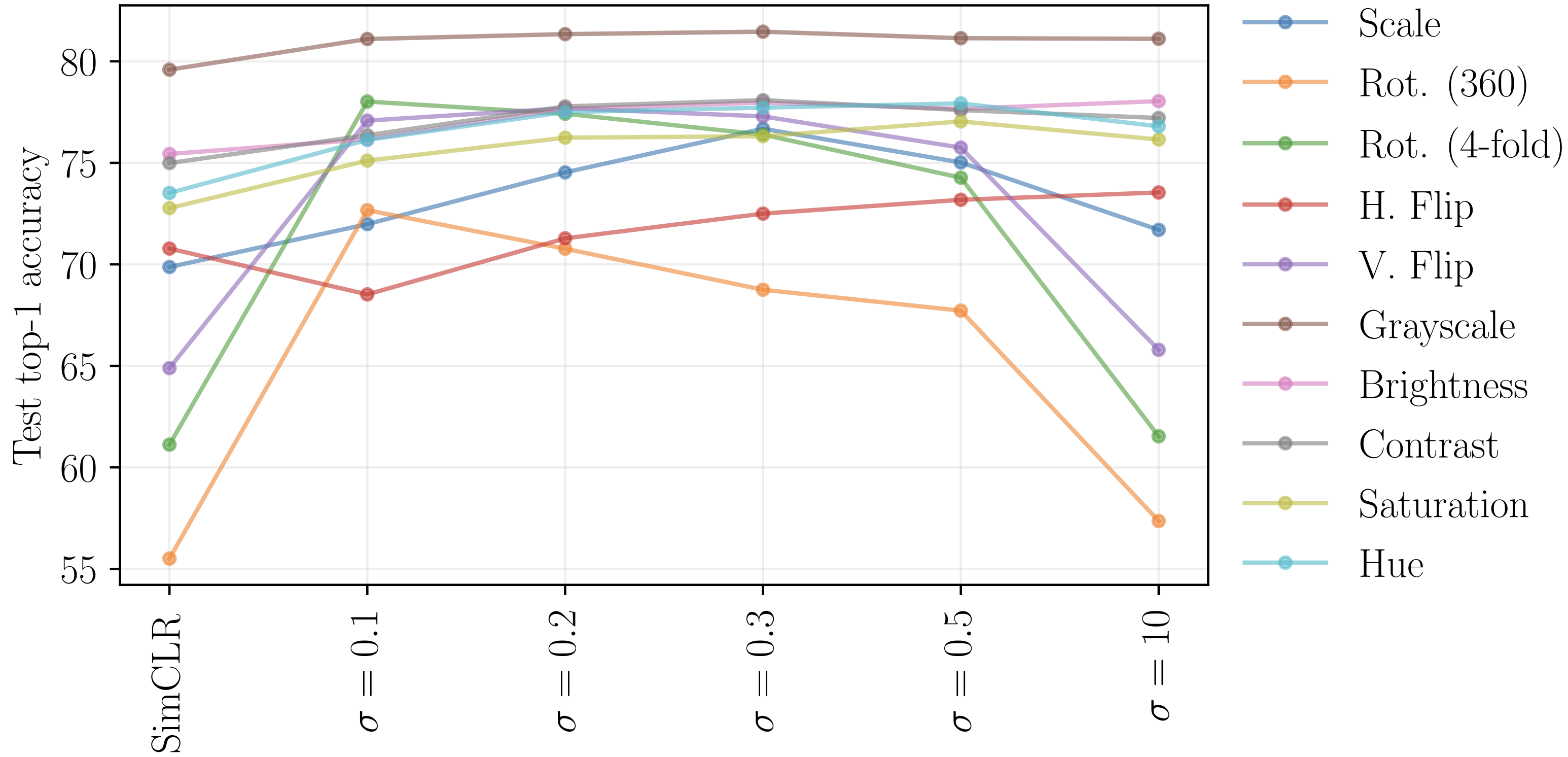}
\end{figure*}

\FloatBarrier
\section{Additional Results for Transfer Learning}
\label{app:transfer}

These results complement those summarized in \Cref{sec:transfer}.
We train a logistic regression classifier on the representations of the training split of each dataset. No augmentations are applied during the classifier training. At test time, we evaluate the classifier on the test set of each dataset.

In \Cref{fig:transfer} we report the difference in accuracies between \method and SimCLR.
\method's structure to rotations yields a gain of +21\% when transferring to Caltech101, and very important gains when transferring to other datasets like CIFAR-10, CIFAR-100, DTD or Pets. Structure to color transformations also proves beneficial, with a +6.36\% gain on Flowers (grayscale), 7.05\% on Food101 \cite{food101} (hue) and 7.13\% on CIFAR-100 (hue). Horizontal flip is the transformation that sees less gain, as expected given its ambiguity as shown in \Cref{fig:p_g_flips}.

It is interesting to see that \method achieves slightly worse performance for rotations or flips on the Flowers dataset. Indeed, this dataset contains many \textit{circular} flowers, which are rotation (or flip) invariant. In such situation, learning structure for rotations (or flips) should not give any gain. Actually, in \method we are trading off content for structure, so if the structure learnt is not useful, we are actually diminishing the expressivity of the final representations. 

Comparing with ESSL \Cref{fig:transfer-stove-essl}, \method achieves better transfer results for most of the datasets and transformations. Interestingly, ESSL improves over \method for geometric transformations on Flowers, due to the trade-off inherent in \method (see \Cref{sec:ambiguity}). For completeness, the results of ESSL compared to SimCLR are shown in \Cref{fig:transfer-essl-simclr}.

The linear regression for ESSL with grayscale on CIFAR-100 did not converge, thus we removed that result from the plots. 

\begin{figure*}[hbt!]
\caption{Difference in accuracies between \method and SimCLR, when transferring representations learnt on TinyImageNet to different datasets in the RRC+1 setting.}
\centering
\label{fig:transfer}
\includegraphics[width=0.75\columnwidth]{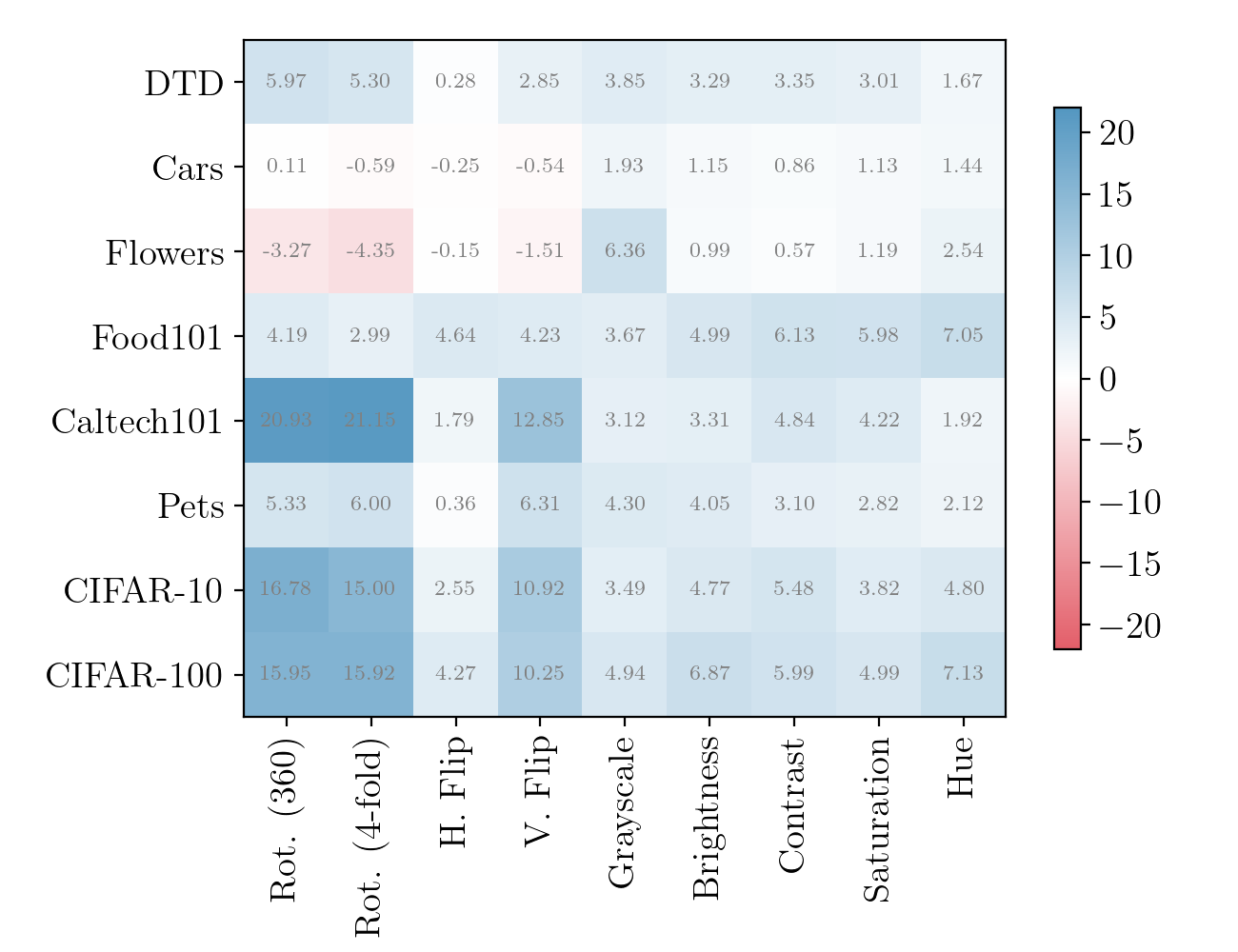}
\vskip -0.2in
\end{figure*}

\begin{figure*}[hbt!]
\caption{Difference in transfer accuracies between \method and ESSL, when transferring representations learnt on TinyImageNet to different datasets in the RRC+1 setting.}
\centering
\label{fig:transfer-stove-essl}
\includegraphics[width=0.75\columnwidth]{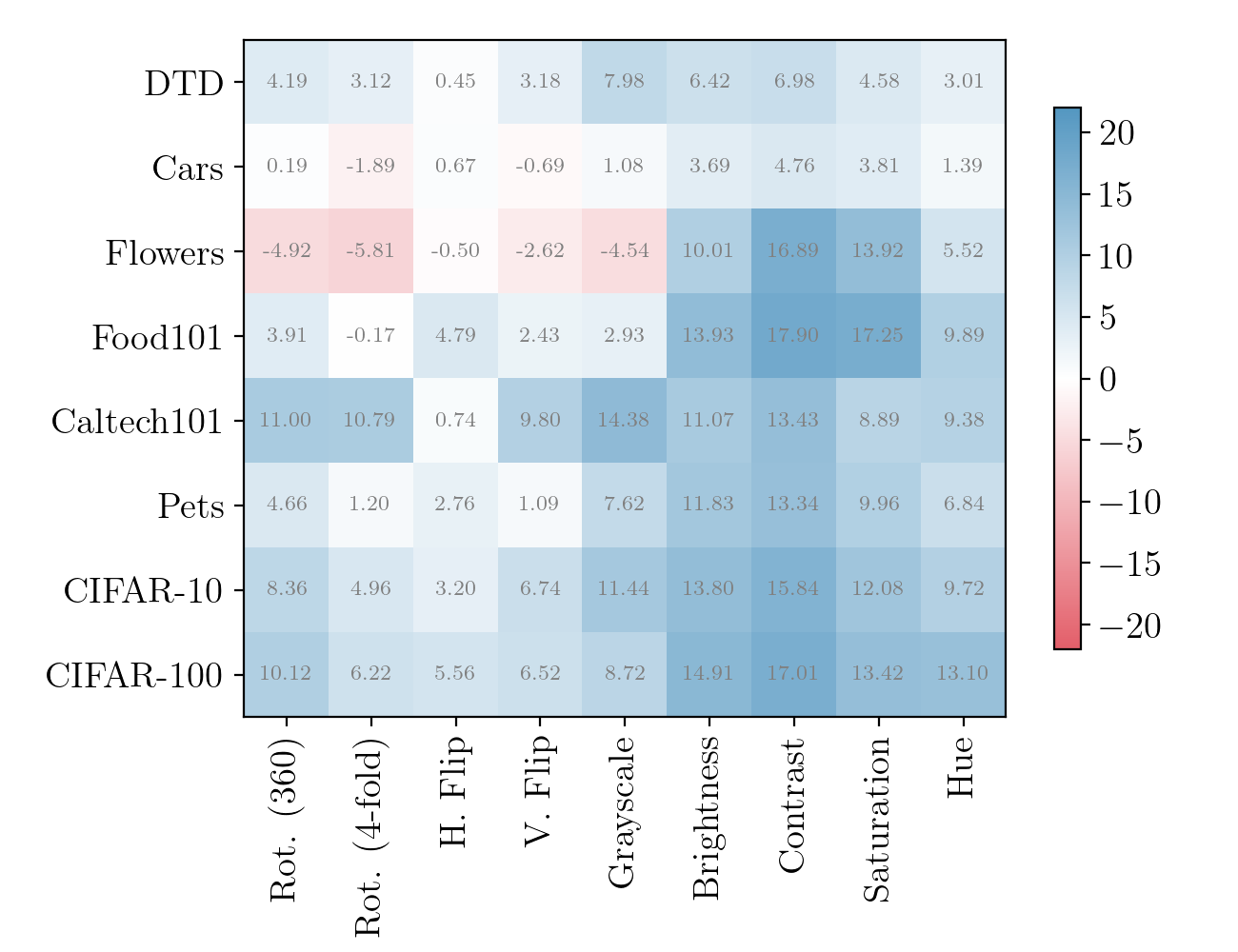}
\end{figure*}

\begin{figure*}[hbt!]
\caption{Difference in transfer accuracies between ESSL and SimCLR, when transferring representations learnt on TinyImageNet to different datasets in the RRC+1 setting.}
\centering
\label{fig:transfer-essl-simclr}
\includegraphics[width=0.75\columnwidth]{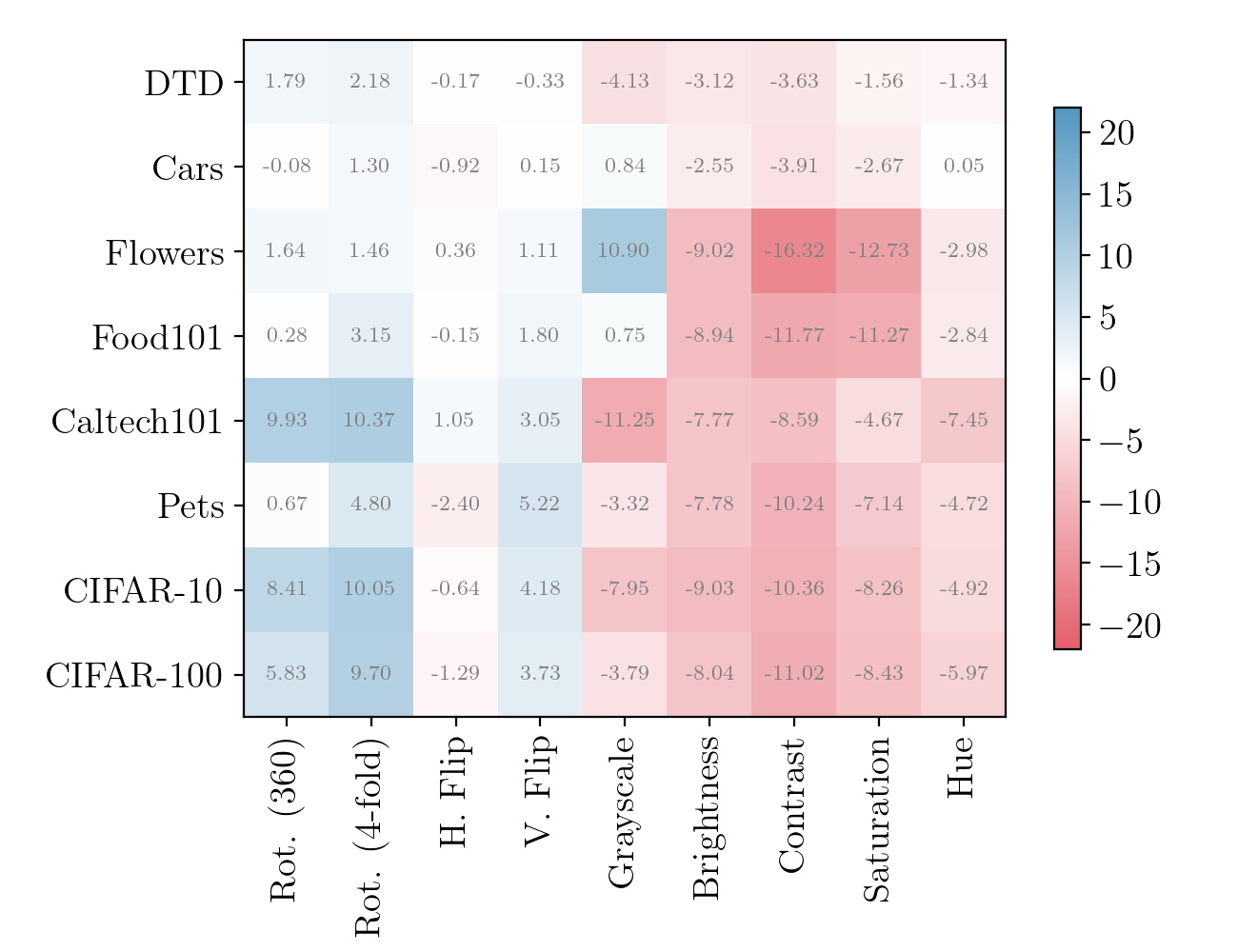}
\end{figure*}

\FloatBarrier
\section{Additional Results about Transformation Ambiguity}
\label{app:p_g}

\begin{figure}[htb!]
\caption{Observed $P(\rg | \vx)$ for different transformations, obtained from 100 randomly sampled CIFAR-10 images. Note the inherent ambiguity for color transformations, in addition to the one observed for horizontal flips in \Cref{fig:p_g_flips}.(left). Also, see how the modes of the distributions correspond to the mapped points in \Cref{tab:groups}.}
\centering
\label{fig:p_g_others}
\vskip 1mm
\begin{subfigure}[b]{0.6\columnwidth}
    \includegraphics[width=0.48\columnwidth]{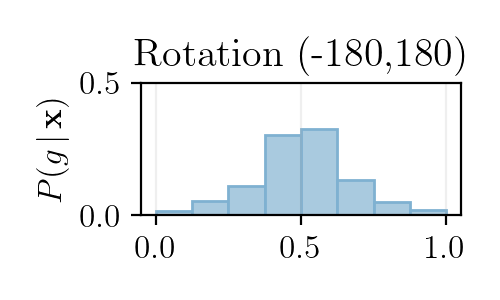}
    \includegraphics[width=0.48\columnwidth]{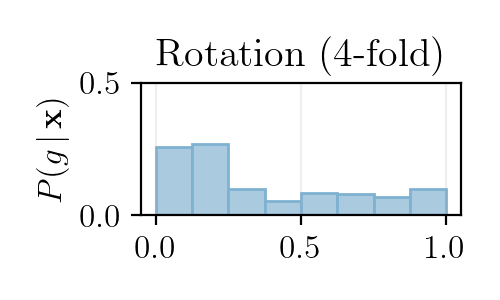}
    \includegraphics[width=0.48\columnwidth]{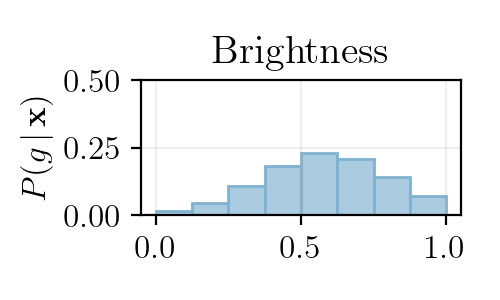}
    \includegraphics[width=0.48\columnwidth]{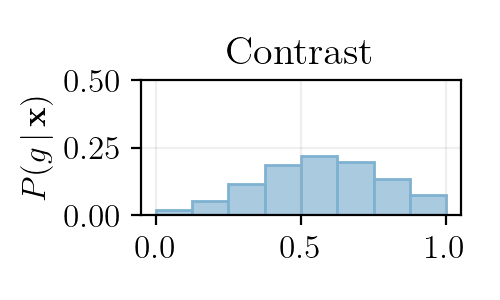}
    \includegraphics[width=0.48\columnwidth]{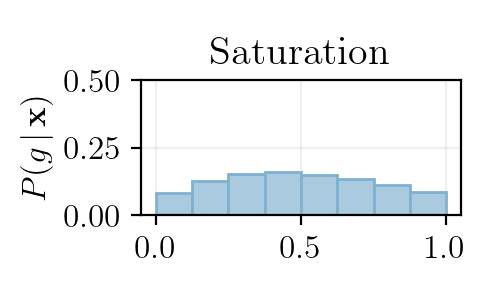}
    \includegraphics[width=0.48\columnwidth]{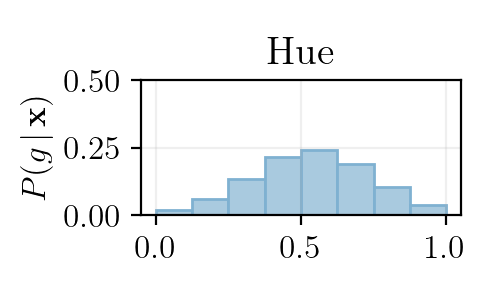}
    \includegraphics[width=0.48\columnwidth]{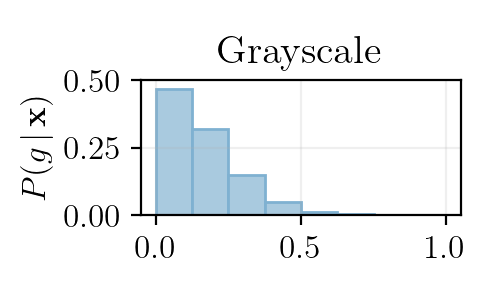}
\end{subfigure}
\vskip -0.2in
\end{figure}
\end{document}